\documentclass[]{bytedance_seed}

\usepackage{amsmath,amsfonts,bm}

\def\1{\bm{1}}

\DeclareMathAlphabet{\mathsfit}{\encodingdefault}{\sfdefault}{m}{sl}
\SetMathAlphabet{\mathsfit}{bold}{\encodingdefault}{\sfdefault}{bx}{n}

\newcommand{\R}{\mathbb{R}}

\newcommand{\softmax}{\mathrm{softmax}}
\newcommand{\sigmoid}{\mathrm{sigmoid}}

\newcommand{\Id}{{\rm Id}}

\newcommand{\ReLU}{{\rm ReLU}}
\newcommand{\LeakyReLU}{{\rm LeakyReLU}}
\newcommand{\GELU}{{\rm GELU}}
\newcommand{\SiLU}{{\rm SiLU}}

\newcommand{\trinorm}[1]{{\left\vert\kern-0.25ex\left\vert\kern-0.25ex\left\vert #1 
   \right\vert\kern-0.25ex\right\vert\kern-0.25ex\right\vert}}

\newcommand{\bu}{\boldsymbol{u}}
\newcommand{\bv}{\boldsymbol{v}}
\newcommand{\bw}{\boldsymbol{w}}
\newcommand{\bx}{\boldsymbol{x}}
\newcommand{\by}{\boldsymbol{y}}
\newcommand{\bz}{\boldsymbol{z}}

\newcommand{\bW}{\boldsymbol{W}}

\newcommand{\cA}{\mathcal{A}}

\newcommand{\cF}{\mathcal{F}}
\newcommand{\cG}{\mathcal{G}}

\newcommand{\cK}{\mathcal{K}}
\newcommand{\cL}{\mathcal{L}}

\newcommand{\cN}{\mathcal{N}}
\newcommand{\cO}{\mathcal{O}}

\newcommand{\cR}{\mathcal{R}}

\newcommand{\cW}{\mathcal{W}}

\newcommand{\bbR}{\mathbb{R}}

\newcommand{\pll}{\kern 0.56em/\kern -0.8em /\kern 0.56em}

\usepackage[toc,page,header]{appendix}

\usepackage[utf8]{inputenc} % allow utf-8 input
\usepackage[T1]{fontenc}    % use 8-bit T1 fonts
\usepackage{titletoc}       % wmz
\usepackage{booktabs}       % professional-quality tables
\usepackage{amsfonts}       % blackboard math symbols
\usepackage{nicefrac}       % compact symbols for 1/2, etc.
\usepackage{microtype}      % microtypography
\usepackage{xcolor}         % colors

\usepackage{graphicx}
\usepackage{etoc}
\usepackage{minitoc}

\usepackage{amsmath}
\usepackage{amssymb}
\usepackage{mathtools}
\usepackage{amsthm}
\usepackage{multirow}
\theoremstyle{plain}
\newtheorem{theorem}{Theorem}[section]
\newtheorem{proposition}[theorem]{Proposition}
\newtheorem{lemma}[theorem]{Lemma}

\theoremstyle{definition}

\usepackage{wrapfig}

\title{ More Expressive Feedforward Layers: \\ Part I. Token-Adaptive Mixing of Activations }

\author[2,*,\S,\dagger]{Mingze Wang}
\author[2,\S]{Jinbo Wang}
\author[1]{Yikuan Xia}
\author[1]{Kai Shen}
\author[1,\dagger]{Shu Zhong}

\affiliation[1]{ByteDance Seed}
\affiliation[2]{Peking University}

\contribution[*]{Work done at ByteDance Seed}
\contribution[\S]{Equal contribution}
\contribution[\dagger]{Corresponding authors}

\abstract{

Feedforward network (FFN) layers account for a large fraction of parameters and nonlinear expressivity in Transformer-based large language models (LLMs). 
Despite the evolution from ReLU and GELU to gated variants such as SwiGLU, most FFN designs still use a single fixed activation function, applying the same nonlinear transformation to all tokens. 
In this work, we propose \textit{Mixture of Activations} (MoA), a token-adaptive FFN design that mixes a dictionary of activation functions using lightweight input-dependent gates while sharing the same linear projections. 
As an input-independent counterpart, we also introduce learnable activations (LA), which form linear combinations of activation functions for both ReLU-type and SwiGLU-type FFNs. 
\textit{Theoretically}, we establish strict finite-width expressive separations among fixed-activation FFNs, LA, and MoA: LA strictly contains fixed-activation FFNs, while MoA strictly contains LA, with the additional expressivity arising from input-dependent nonlinear hybridization. 
\textit{Empirically}, we evaluate MoA through extensive pre-training experiments on dense and MoE language models ranging from 0.12B to 2B parameters under different token budgets, optimizers, and learning rate schedules. 
MoA consistently achieves lower terminal loss and exhibits more favorable scaling behavior than well-tuned baselines, with minimal parameter and computational overhead. These results suggest that token-adaptive activation mixing is a simple and effective mechanism for improving FFN expressivity in LLMs.
}

\date{\today}
\correspondence{\text{Mingze Wang at \email{mingzewang.math@gmail.com}, Shu Zhong at \email{zhongshu@bytedance.com}}}

\begin{document}
\maketitle

%不需要目录就注释掉 注意目录不要和第一页放在一块 要有\newpage
%\newpage
%\tableofcontents
%\newpage

\section{Introduction}

% \vspace{-.15cm}

Feedforward network (FFN) layers are a central component of Transformer-based large language models (LLMs) and account for a large fraction of model parameters~\citep{vaswani2017attention}.
While self-attention enables token-to-token information exchange, FFN layers apply nonlinear transformations independently to each token and have been linked to knowledge and memory storage in LLMs~\citep{dai2022knowledge,wang2024understanding,zhong2025understanding}.
Improving FFN layers is therefore an important direction for enhancing the expressivity of modern LLMs.

% \vspace{-.05cm}

The expressive power of FFN layers is largely determined by their activation functions, whose design has evolved substantially with LLM architectures. 
The original Transformer used ReLU~\citep{vaswani2017attention}, while later language models widely adopted smooth nonlinearities such as GELU~\citep{devlin2019bert,radford2019language}.
More recently, multiplicative gated activations, such as SwiGLU~\citep{shazeer2020glu}, have become standard in modern LLMs~\citep{touvron2023llama}.

% \vspace{-.05cm}

Despite these advances, most FFN designs still rely on a single fixed activation function. This imposes the same nonlinear form across layers, tokens, and channels, which may restrict representation: 
(i) the desired nonlinearity may not be well captured by any single hand-designed activation function; 
(ii) different tokens may benefit from different nonlinearities.
Prior work addresses the first issue by learning linear combinations of activation functions~\citep{manessi2018learning,sutfeld2018adaptive}.
However, these combinations are typically input-independent, and therefore apply the same activation hybrid to all tokens.

In this work, we investigate how to improve FFN expressivity through token-adaptive nonlinear hybridization of activation functions. \textbf{Our contributions} are summarized as follows:

% \vspace{-.1cm}

\begin{itemize}
    \item 
    We propose mixture-of-activations (\textbf{MoA}), a token-adaptive FFN design that uses lightweight input-dependent gates to mix activation functions for each token.
    Unlike mixture-of-experts (MoE), which routes tokens to different parameterized experts, MoA mixes activation functions while sharing the same linear projections.
    This design enables token-adaptive nonlinear transformations with minimal computation and parameter overhead.
    As an input-independent counterpart, we also introduce learnable activations (\textbf{LA}) for both ReLU-type and SwiGLU-type FFNs, which form linear combinations over a dictionary of modern activation functions.
    
    \item 
    Theoretically, we establish \textbf{strict expressive separations} among standard FFNs, LA, and MoA. Specifically, (i) at the same width, LA contains all standard FFNs with a single fixed activation, and there exists a function representable by width-$1$ LA but not by any finite-width standard FFN with a single fixed activation.
    (ii) similarly, MoA contains LA at the same width, and there exists a function representable by width-$1$ MoA but not by any finite-width LA network.
    These results show that the expressive advantage of MoA arises from nonlinear, token-adaptive hybridization of activation functions.
    
    \item Empirically, we conduct \textbf{extensive language pre-training experiments} to evaluate MoA. 
    We consider both dense and MoE LLMs, with model sizes ranging \textit{from 0.12B to 2B} parameters, training on 
    % the high-quality FineWeb-Edu dataset 
    high-quality pre-training corpus under various token budgets. 
    We evaluate multiple training configurations, including AdamW and Muon optimizers, as well as cosine decay and warmup-stable-decay learning rate (lr) schedules.
    MoA consistently achieves \textit{lower terminal loss} and exhibits \textit{more favorable scaling behavior} than well-tuned baselines, demonstrating its potential to scale to larger models.
    Experiments on dense models further show that MoA \textit{tolerates larger lr} than the baseline.
    For MoE models, even without additional MoA-specific lr tuning, MoA improves over well-tuned Muon-trained baselines.
    Finally, we extend MoA to self-supervised vision pre-training, where it continues to improve the convergence.
\end{itemize}

% \vspace{-.2cm}

\section{Related Works}
\label{section: related works}

% \vspace{-.15cm}

\textbf{Evolution of activation functions.}
Activation functions are a central design choice in neural networks, as they determine nonlinear expressivity.
Early deep networks commonly used sigmoidal or hyperbolic tangent nonlinearities, whereas rectified linear units (ReLU) improved optimization by alleviating saturation and inducing sparse activations~\citep{glorot2011deep}.
Subsequent work introduced rectifier variants and smooth nonlinearities, such as GELU, Swish, and SiLU~\citep{he2015delving,clevert2015fast,klambauer2017self,hendrycks2016GELU,ramachandran2018searching,elfwing2018sigmoid,misra2019mish}.
The original Transformer used ReLU in its FFN layers~\citep{vaswani2017attention}, while later language models widely adopted GELU~\citep{devlin2019bert,radford2019language}.
More recently, multiplicative gated activations have become standard in LLMs. GLU was introduced as a gating mechanism for language modeling~\citep{dauphin2017language}, and the variants such as GEGLU and SwiGLU were shown to improve FFNs~\citep{shazeer2020glu}.
SwiGLU is widely adopted in modern LLMs.
\textit{Unlike these works}, which design or select a single activation function, our method constructs hybrid activations from a dictionary of candidate nonlinearities.

\textbf{Learnable and combined activations.}
A related line of work makes activation functions trainable rather than fixed~\citep{apicella2021survey}.
Parametric activations, such as PReLU~\citep{he2015delving}, introduce a small number of learnable shape parameters.
Adaptive piecewise linear units learn neuron-wise piecewise linear activations~\citep{agostinelli2014learning}, while Maxout units learn a convex piecewise linear activation by taking the maximum over affine functions~\citep{goodfellow2013maxout}.
Other methods explicitly learn linear combinations of activation functions~\citep{manessi2018learning,sutfeld2018adaptive,goyal2019learning,zhuo2024polynomial}.
Our learnable activation (LA) variant is closely related to methods that learn input-independent activation combinations.
However, LA is designed for both standard FFNs and SwiGLU-type gated FFNs, where activation combinations can be applied to different branches.
Recent work on KANs introduces learnable activation functions on network edges~\citep{liu2024kan}; however, their linear combinations of activation bases remain input-independent~\citep{liu2024kan}.
\textit{In contrast}, our MoA makes the mixing coefficients input-dependent, allowing different tokens to use different activation hybrids within the same FFN layer.

\textbf{Mixture models and conditional computation.}
MoA is also related to mixture models and conditional computation.
Classical mixture-of-experts models use a gating network to combine specialized expert networks according to the input~\citep{jacobs1991adaptive,jordan1994hierarchical}.
Modern sparse MoE layers scale this idea by routing each token to a small subset of parameterized experts, increasing model capacity without a proportional increase in computation~\citep{shazeer2017outrageously}.
\textit{In comparison}, MoA applies input-dependent gating at the activation level: instead of routing tokens to different parameter experts, it mixes activation functions while sharing the same linear projections.
Thus, MoA provides token-adaptive nonlinear transformations with substantially smaller architectural changes than standard MoE layers.

\textit{To the best of our knowledge}, MoA is the first FFN design that performs input-dependent mixing over a heterogeneous activation dictionary within shared linear projections. It thereby combines the flexibility of learnable activations with the token adaptivity of conditional computation.

% \vspace{-.1cm}

\section{Method}
\label{section: method}

% \vspace{-.15cm}

In this section, we introduce FFN layers and their variants.
We first review two standard FFN forms used in modern Transformers.
We then introduce learnable activations (LA), which use input-independent linear combinations of activation functions.
Finally, we propose mixture-of-activations (MoA), whose input-dependent mixing weights yield a nonlinear, adaptive hybrid of activations.

\textbf{Notations.} 
We denote the Hadamard product by $\odot$. For a finite set $\cK$, $|\cK|$ denotes its cardinality. For a positive integer $m$, let $[m]=\{1,\ldots,m\}$.
The Gaussian distribution with mean $\mu$ and variance $\sigma^2$ is denoted by $\cN(\mu,\sigma^2)$.
We use the activations
$\ReLU(x)=\max\{0,x\},\ReLU^2(x)=\max\{0,x\}^2$, $\LeakyReLU(x)=\max\{x,\eta x\}$,
$\sigmoid(x)=1/(1+e^{-x})$,
$\tanh(x)=(e^x-e^{-x})/(e^x+e^{-x})$,
$\SiLU(x)=x\sigmoid(x)$,
$\GELU(x)=x\Phi(x)$,
where $\eta\in(0,1)$ is fixed and $\Phi$ is the cumulative distribution function of $\cN(0,1)$.
The identity activation is denoted by $\Id(x)=x$.
All activation functions are applied elementwise to vector inputs.

% \vspace{-.1cm}

\subsection{Standard FFN}

% \vspace{-.1cm}

Let $\bx\in\bbR^d$ be the input to an FFN layer, and let $D$ denote the hidden width. For simplicity, we take the input and output dimensions to be $d$. We consider two widely used standard FFN forms.

\textbf{Type-I FFN.} A Type-I FFN is defined as
\begin{equation}
    f(\bx) = \bW_2\,\sigma(\bW_1 \bx),
\end{equation}
where $\bW_1 \in \bbR^{D \times d}$, $\bW_2 \in \bbR^{d \times D}$, and $\sigma$ is the activation function.
Transformer architectures typically use $D=4d$, with common choices of $\sigma$ including $\ReLU$, $\ReLU^2$, and $\GELU$.

\textbf{Type-II FFN.}
A Type-II FFN introduces multiplicative gating structure:
\begin{equation}\label{eq: standard FFN, type II}
    g(\bx) = \bW_3\big(\sigma(\bW_1 \bx) \odot (\bW_2 \bx)\big),
\end{equation}
where $\bW_1,\bW_2\in\bbR^{D \times d}$ and $\bW_3 \in \R^{d \times D}$. 
A representative example is SwiGLU, obtained by setting $\sigma=\SiLU$. 
To match the parameter budget of a Type-I FFN with width $4d$, many LLM architectures set $D=8d/3$ for this form.

% \vspace{-.1cm}

\subsection{Linear Hybrid: Learnable Activations}

% \vspace{-.1cm}

We first consider a direct input-independent hybridization of activation functions. 
Let $\cK=\{\sigma_1,\ldots,\sigma_m\}$ be a small dictionary of candidate activations. For Type-I FFNs, we use
\[
    \cK_\cF \subseteq \{\ReLU^2, \GELU, \SiLU, \LeakyReLU, \ReLU, \tanh\}
\]
whereas for Type-II FFNs, we use
\[
    \cK_\cG \subseteq \{\Id, \GELU, \SiLU, \ReLU, \LeakyReLU, \ReLU^2, \tanh\}.
\]
We include the identity $\Id$ for Type-II FFNs because the second branch of SwiGLU~\eqref{eq: standard FFN, type II} is identity.

\textbf{Type-I LA.}
For Type-I FFNs, LA replaces the single nonlinearity with a linear combination:

% \vspace{-.4cm}

\begin{equation}\label{eq: LA: Type-I}
    f_{\mathrm{LA}}(\bx)
    = \bW_2\Bigg(\sum_{k=1}^{|\cK_\cF|} \alpha_k\,\sigma_k(\bW_1\bx)\Bigg),
\end{equation}

% \vspace{-.2cm}

where $\alpha_k\in\bbR$ are trainable scalar coefficients. 
Since the coefficients are shared across inputs, LA is a linear, input-independent hybrid of activation functions. 
Moreover, all candidate activations share the same linear projection $\bW_1\bx$, so LA adds only $|\cK_\cF|$ scalar parameters.

\textbf{Type-II LA.}
For Type-II FFNs, let $\by=\bW_1\bx,\bz=\bW_2\bx$.
% % \vspace{-.5cm}
% \begin{equation*}
%     \by=\bW_1\bx,\quad\bz=\bW_2\bx.
% \end{equation*}
% % \vspace{-.1cm}
Then SwiGLU takes the form $g(\bx)=\bW_3(\SiLU(\by)\odot\bz)$. Its multiplicative structure gives rise to the following LA variants.

\begin{itemize}
    \item \textbf{One-sided LA.}
    This variant keeps the gating branch fixed and replaces the linear branch with a learnable activation mixture:

    % \vspace{-.5cm}
    
    \begin{equation}\label{eq: one-LA: Type-II}
        g_{\mathrm{one\mbox{-}LA}}(\bx)
        = \bW_3\Bigg(\SiLU(\by) \odot \sum_{k=1}^{|\cK_\cG|} \alpha_k\,\sigma_k(\bz)\Bigg),
    \end{equation}

    % \vspace{-.1cm}
    
    where $\alpha_k\in\bbR$ are trainable scalar coefficients.

    \item \textbf{Bi-sided LA.}
    This variant applies learnable activation mixtures to both branches:
    
    % \vspace{-.3cm}
    
    \begin{equation}\label{eq: bi-LA: Type-II}
        g_{\mathrm{bi\mbox{-}LA}}(\bx)
        = \bW_3\Bigg(\sum_{k=1}^{|\cK_\cG|} \beta_k\,\sigma_k(\by) \odot \sum_{\ell=1}^{|\cK_\cG|} \alpha_\ell\,\sigma_\ell(\bz)\Bigg),
    \end{equation}

    % \vspace{-.1cm}
    
    where $\beta_k\in\bbR$ and $\alpha_\ell\in\bbR$ are trainable scalar coefficients.

    \item \textbf{Quadratic LA.}
    This variant directly mixes pairwise activation products across the two branches:

    % \vspace{-.3cm}

    \begin{equation}\label{eq: qd-LA: Type II}
        g_{\mathrm{qd\mbox{-}LA}}(\bx)
        = \bW_3\Bigg(\sum_{1\leq k \leq\ell\leq|\cK_\cG|} \alpha_{k\ell}\, \sigma_k(\by) \odot \sigma_\ell(\bz)\Bigg),
    \end{equation}

    % \vspace{-.1cm}
    
    where $\alpha_{k\ell}\in\bbR$ are trainable scalar coefficients. 
\end{itemize}

% \vspace{-.2cm}

\subsection{Nonlinear Hybrid: Mixture of Activations}

% \vspace{-.1cm}

LA uses fixed mixing coefficients and therefore defines a global activation hybrid \textit{shared for all tokens}. 
We now introduce Mixture of Activations (MoA), whose mixing weights \textit{depend on the input token}. This input dependence makes the activation hybrid nonlinear and adaptive.

MoA uses a lightweight gating function $\phi$ to generate input-dependent mixing weights. For training stability, we choose bounded gates, such as $\sigmoid$, $\tanh$, or a $\softmax$ over activation indices. 
We empirically compare these choices in Section~\ref{section: experiments}.

\textbf{Type-I MoA.} For Type-I FFNs, MoA is defined as

% \vspace{-.4cm}

\begin{equation}\label{eq: MoA: Type-I}
    f_{\mathrm{MoA}}(\bx)
    = \bW_2\Bigg(\sum_{k=1}^{|\cK_\cF|} \pi_k(\bx)\,\sigma_k(\bW_1 \bx)\Bigg),
    \quad
    \pi_k(\bx)=\phi(\bu_k^\top \bx),
\end{equation}

% \vspace{-.2cm}

where $\bu_k\in\R^d$ are trainable gating parameters. Unlike LA, the coefficients $\pi_k(\bx)$ vary across tokens, allowing different inputs to use different activation hybrids.

\textbf{Type-II MoA.}
For Type-II FFNs, we again let $\by=\bW_1\bx$ and $\bz=\bW_2\bx$. Analogously to Type-II LA, Type-II MoA admits three variants.

\begin{itemize}
    \item \textbf{One-sided MoA.} This variant keeps the SwiGLU gating branch fixed and replaces the second branch with a token-adaptive activation mixture:

    % \vspace{-.3cm}
    
    \begin{equation}\label{eq: one-MoA: Type-II}
        g_{\mathrm{one\mbox{-}MoA}}(\bx)
        = W_3\Bigg(\SiLU(\by) \odot \sum_{k=1}^{|\cK_\cG|} \pi_k(\bx)\,\sigma_k(\bz)\Bigg),
        \quad \pi_k(\bx)=\phi(\bu_k^\top \bx),
    \end{equation}

    % \vspace{-.1cm}
    
    where $\bu_k\in\R^d$ are trainable gating parameters.
    
    \item \textbf{Bi-sided MoA.}
    This variant uses token-dependent mixtures in both branches:

    % \vspace{-.3cm}
    
    \begin{equation}\label{eq: bi-MoA: Type-II}
        g_{\mathrm{bi\mbox{-}MoA}}(\bx)
        =\bW_3\Bigg(\sum_{k=1}^{|\cK_\cG|} \rho_k(\bx)\,\sigma_k(\by) \odot \sum_{\ell=1}^{|\cK_\cG|} \pi_\ell(\bx)\,\sigma_\ell(\bz)\Bigg),\quad
        \begin{cases}
            \rho_k(\bx)=\phi(\bv_k^\top \bx)\\ \pi_\ell(\bx)=\phi(\bu_\ell^\top \bx)
        \end{cases},
    \end{equation}

    % \vspace{-.1cm}
    
    where $\bu_k,\bv_\ell\in\R^d$ are trainable gating parameters.

    \item 
    \textbf{Quadratic MoA.}
    This variant makes the quadratic activation-pair coefficients input-dependent:

    % \vspace{-.3cm}
    
    \begin{equation}\label{eq: qd-MoA: Type II}
        g_{\mathrm{qd\mbox{-}MoA}}(\bx)
        = \bW_3\Bigg(\sum_{1\le k \le \ell \le |\cK_\cG|} \pi_{k\ell}(\bx)\, \sigma_k(\by) \odot \sigma_\ell(\bz)\Bigg),
        \quad \pi_{k\ell}(\bx)=\phi(\bu_{k\ell}^\top \bx),
    \end{equation}

    % \vspace{-.1cm}
    
    where $\bu_k\in\R^d$ are trainable gating parameters. This form allows the preferred activation pair to vary across tokens.

    \textbf{Soft versus hard gating.}
    MoA uses soft gating by default, so all candidate activations contribute for each input. Alternatively, one could use MoE-style hard gating to select only a subset of activations for each input.
    We do not adopt hard gating in this work: unlike MoE, whose experts have distinct parameters, MoA shares the same linear mappings across activations. Therefore, soft gating introduces little additional overhead, and sparsity is not essential in our setting.
\end{itemize}

% % \vspace{-.2cm}

\section{Theory}
\label{section: theory}

% \vspace{-.05cm}

We establish strict expressive separations among FFNs with fixed activations, learnable activations, and mixture-of-activations.
Although standard Type I FFNs are universal approximators, universality requires growing width, and the width needed to achieve a prescribed accuracy may be prohibitively large~\citep{bach2017breaking}. 
We therefore compare expressivity at fixed finite width.

Following standard expressivity analyses~\citep{barron1992neural,barron1993universal}, we consider scalar-valued outputs; vector-valued outputs follow componentwise. We also include bias terms, since two-layer neural networks without biases may lose universal approximation properties and lead to degenerate results. Accordingly, we use the augmented input $\bar\bx=(\bx,1)\in\bbR^{d+1}$.

% \vspace{-.05cm}

\subsection{Expressive Separation for Type-I FFNs}
\label{section: theory: Type I}

% \vspace{-.05cm}

\textbf{Function classes.}
Let \(\cK=\{\sigma_1,\ldots,\sigma_6\}=\{\mathrm{ReLU},\mathrm{ReLU}^2,\mathrm{LeakyReLU},\mathrm{GELU},\mathrm{SiLU},\tanh\}\) be the activation dictionary.
For a fixed width $m$ and activation $\sigma\in\cK$, define the standard two-layer neural network class 
\[\cF_{\sigma}^{(m)}
:=
\left\{
\bx\mapsto \sum_{k=1}^{m} a_k \sigma(\bw_k^\top \bar \bx)
:
a_k\in\mathbb R,\ \bw_k\in\mathbb R^{d+1}
\right\}.\]
The LA class is 
\[\cF_{\rm LA}^{(m)}
:=\left\{
\bx\mapsto
\sum_{k=1}^{m} a_k\sum_{c=1}^6 \alpha_c \sigma_c(\bw_k^\top \bar \bx)
:
a_k,\alpha_c\in\mathbb R,\ \bw_k\in\mathbb R^{d+1}\right\}.\]
The MoA class is
\[\cF_{\rm MoA}^{(m)}
:=\left\{
\bx\mapsto
\sum_{k=1}^{m} a_k\sum_{c=1}^6
\tanh(\bu_c^\top \bar \bx)\sigma_c(\bw_k^\top \bar \bx)
:
a_k\in\mathbb R,\ \bu_c,\bw_k\in\mathbb R^{d+1}
\right\}.\]
For concreteness, we use $\tanh$ gates in the analysis; analogous arguments apply to other bounded gates.

We show that MoA is strictly more expressive than both LA and fixed-activation FFNs at the same finite width. 
For a domain \(\Omega\subset\mathbb R^d\), define the $\cW^{1,\infty}$-norm as $\|f\|_{\cW^{1,\infty}}:=\|f\|_{L^\infty(\Omega)}+\|\nabla f\|_{L^\infty(\Omega)}$, where \(\nabla f\) denotes the weak gradient.

\begin{theorem}[Strict expressive hierarchy for Type-I FFNs]
\label{thm: expressive hierarchy, Type I}
Consider $d\ge2$ and let $\Omega=[-1,1]^d$. For every width \(m\ge 1\), the following strict inclusions hold:
\[
\bigcup_{\sigma\in\cK}\cF_{\sigma}^{(m)}
\subsetneq
\cF_{\rm LA}^{(m)}
\subsetneq
\cF_{\rm MoA}^{(m)}.
\]

% \vspace{-.2cm}

Specifically, the strictness is witnessed by the following constructions:

% \vspace{-.2cm}

\begin{itemize}
    \item There exists a width-\(1\) target $T_{\rm LA}(\bx)=\ReLU(x_1) + \ReLU^2(x_1)$ such that
    $T_{\rm LA}\in \cF_{\rm LA}^{(1)}
    \subset
    \cF_{\rm MoA}^{(1)}$,
    but, for every \(m\ge 1\),
    $\inf\limits_{f\in\bigcup\limits_{\sigma\in\cK}\cF_{\sigma}^{(m)}}\|T_{\rm LA}-f\|_{\cW^{1,\infty}}\geq\frac{1}{m+1}$.
    Hence, $T_{\rm LA}\notin\bigcup\limits_{\sigma\in\cK}\cF_{\sigma}^{(m)}$.

    % \vspace{-.2cm}
    
    \item For any \(\lambda>0\), there exists a width-\(1\) target $T_{\rm MoA}(\bx)=\tanh(\lambda x_1)\ReLU(x_2)$ such that
    $T_{\rm MoA}\in \cF_{\rm MoA}^{(1)}$,
    but, for every \(m\ge 1\),
    $\inf\limits_{f\in\cF_{\rm LA}^{(m)}}\|T_{\rm MoA}-f\|_{\cW^{1,\infty}}\geq\frac12\tanh(\lambda)$. Hence, $T_{\rm MoA}\notin\cF_{\rm LA}^{(m)}$ and

    % \vspace{-.4cm}

    $T_{\rm MoA}\notin\bigcup\limits_{\sigma\in\cK}\cF_{\sigma}^{(m)}$.
\end{itemize}
\end{theorem}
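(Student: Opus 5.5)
The two non-strict inclusions are direct. For $\cF_{\sigma_c}^{(m)}\subseteq\cF_{\rm LA}^{(m)}$, take $\alpha=e_c$. For $\cF_{\rm LA}^{(m)}\subseteq\cF_{\rm MoA}^{(m)}$, use constant gates $\bu_c=(0,\dots,0,b_c)$, so that $\tanh(\bu_c^\top\bar\bx)=\tanh(b_c)$. Since $\tanh$ only reaches values in $(-1,1)$, first rescale: write $\alpha_c=s\,\tilde\alpha_c$ with $\max_c|\tilde\alpha_c|<1$ and $s>0$, absorb $s$ into the $a_k$, and set $b_c={\rm artanh}(\tilde\alpha_c)$. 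Membership of the witnesses is equally direct.
\begin{itemize}
\item $T_{\rm LA}$ is obtained with $m=1$, $\bw_1=e_1$ and $\alpha=e_{\ReLU}+e_{\ReLU^2}$.
\item $T_{\rm MoA}$ is obtained with $m=1$, $\bw_1=e_2$, $\bu_{\ReLU}=\lambda e_1$ and all other gates $\bu_c=0$, which kills the other terms because $\tanh(0)=0$.
\end{itemize}
All of the work is in the two lower bounds. In both I will lower-bound $\|\nabla(T-f)\|_{L^\infty}$ alone, using one-sided limits of the gradient across hyperplanes.

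\textbf{First separation.} I split by $\sigma$.
\begin{itemize}
\item If $\sigma\in\{\GELU,\SiLU,\tanh,\ReLU^2\}$, every $f\in\cF_\sigma^{(m)}$ is $C^1$, so $\nabla f$ is continuous. Meanwhile $\partial_1T_{\rm LA}$ jumps by $1$ across $\{x_1=0\}$. Pick a point on that hyperplane (with $x_2$ generic); near it $\nabla g$, for $g=T_{\rm LA}-f$, has one-sided essential limits differing by $1$. Hence $\|\nabla g\|_\infty\ge 1/2\ge 1/(m+1)$.
\item If $\sigma\in\{\ReLU,\LeakyReLU\}$, then $f$ is piecewise linear. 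Restrict to a generic horizontal segment $\{(t,x_2,\dots):t\in[0,1]\}$ that avoids degenerate intersections. Along it, $\partial_1 f$ is piecewise constant with at most $m+1$ pieces, while $\partial_1T_{\rm LA}=1+2t$ has slope $2$. A piecewise constant function with $p$ pieces approximates a linear function of slope $2$ on $[0,1]$ with sup error at least $2/(2p)$. This gives $\ge 1/(m+1)$, and the bound transfers to the essential supremum on $\Omega$ by Fubini over an open set of such segments.
\end{itemize}

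\textbf{Second separation.} Fix $f=\sum_k a_k\psi(\bw_k^\top\bar\bx)\in\cF_{\rm LA}^{(m)}$ with $\psi=\sum_c\alpha_c\sigma_c$. The function $\psi$ is $C^\infty$ away from $0$. At $0$ its derivative has a jump of size $\kappa=\alpha_{\ReLU}+(1-\eta)\alpha_{\LeakyReLU}$, since $\ReLU^2$, $\GELU$, $\SiLU$ and $\tanh$ are all $C^1$. Consequently $f$ is smooth off finitely many hyperplanes $H_k=\{\bw_k^\top\bar\bx=0\}$. Across $H_k$, $\nabla f$ jumps by $a_k\kappa\,\bw_k^{(1:d)}$, a vector that is \emph{constant along the hyperplane}. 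Summing over the $k$ with $H_k=\{x_2=0\}$, the total jump of $\nabla f$ across $\{x_2=0\}$ is a constant vector $J$.

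By contrast, $\nabla T_{\rm MoA}$ jumps across $\{x_2=0\}$ by $\tanh(\lambda x_1)e_2$. So the jump of $\nabla g$, for $g=T_{\rm MoA}-f$, at $(x_1,0,x_3,\dots)$ is $\tanh(\lambda x_1)e_2-J$. Since $\tanh(\lambda x_1)$ takes both values $\pm\tanh\lambda$ at $x_1=\pm1$, taking $x_1$ near $1$ or near $-1$ makes this jump have norm at least $\tanh\lambda$; pick such points generically, away from the other $H_k$. Near such a point $g$ is smooth on each side, so one-sided limits of $\nabla g$ exist and differ by at least $\tanh\lambda$. Therefore $\|\nabla g\|_{L^\infty}\ge\frac12\tanh\lambda$. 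Since every $\cF_\sigma^{(m)}\subseteq\cF_{\rm LA}^{(m)}$, the same bound excludes $T_{\rm MoA}$ from all fixed-activation classes.

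\textbf{Main obstacle.} The delicate step is making the jump argument rigorous for weak gradients and the $L^\infty$ essential supremum. This needs three things:
\begin{itemize}
\item handling hyperplanes $H_k$ that coincide with or accumulate near the chosen point, resolved by genericity, since there are only finitely many hyperplanes;
\item degenerate $\bw_k$ with zero spatial part, which only contribute constants;
\item care that the $\ReLU^2$ component is $C^1$, so it contributes no gradient jump.
\end{itemize}
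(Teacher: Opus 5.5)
Your proposal is correct and follows the paper's proof essentially step for step. Both use the same unit-coefficient inclusion and the same constant-gate inclusion through the bias coordinate after rescaling by $\operatorname{artanh}$. Both split the first separation into piecewise-linear activations, where a step function approximates $1+2t$ and gives $1/(m+1)$, and $C^1$ activations, where the unit derivative jump at $x_1=0$ gives $1/2$. Both obtain the second separation from the fact that LA networks have a constant gradient jump across $\{x_2=0\}$, while $T_{\rm MoA}$ has jump $\tanh(\lambda x_1)$, which gives $\frac12\tanh\lambda$. The only difference is that you work directly on $[-1,1]^d$ using generic slices and Fubini, whereas the paper argues on one- and two-dimensional domains. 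Your version therefore makes the reduction for general $d\ge2$ explicit.
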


The full proof of Theorem~\ref{thm: expressive hierarchy, Type I} is provided in Appendix~\ref{appendix: proof: Type I}. We summarize the main ideas below.

\textbf{Main insight of the inclusions.}
The inclusion \(\cF_\sigma^{(m)}\subset\cF_{\rm LA}^{(m)}\) is immediate, since a fixed activation is a special case of LA: setting one activation coefficient to be one and the others to be zero recovers any fixed-activation FFN. 
The inclusion \(\cF_{\rm LA}^{(m)}\subset \cF_{\rm MoA}^{(m)}\) follows because MoA gates can realize constants mixing weights by using only the bias coordinate. Thus, MoA contains LA as the special case in which the gates are input-independent.

\textbf{Main insight of the separation.}
The two strict inclusions arise from different sources of flexibility.

% \vspace{-.1cm}

\begin{itemize}
    \item LA allows each neuron to use a learned linear combination of several activation primitives.
    For example, \(T_{\rm LA}(\bx)=\ReLU(x_1)+\ReLU^2(x_1)\) can be represented by a single LA neuron by combining the $\ReLU$ and $\ReLU^2$ branches. 
    In contrast, a fixed-activation FFN cannot simultaneously reproduce both local behaviors at same finite width. 
    ReLU-type networks are piecewise linear and therefore have piecewise-constant derivatives, which cannot represent the curvature induced by \(\ReLU^2\).
    Smooth activations such as \(\GELU\), \(\SiLU\), and \(\tanh\) have continuous derivatives and therefore cannot reproduce the derivative discontinuity of $\ReLU$ at the origin. 
    Thus, LA is strictly more expressive than any single fixed activation at the same width.
    
    \item LA uses global coefficients \(\alpha_c\), so the same activation hybrid is applied to every input.
    MoA replaces these constants with gates of the form \(\tanh(\bu_c^\top\bar\bx)\), making the activation mixture input-dependent. 
    This enables multiplicative, input-adaptive interactions across coordinates. 
    For example, $T_{\rm MoA}(\bx)=\tanh(\lambda x_1)\ReLU(x_2)$ can be represented by a single MoA neuron: the \(\ReLU(x_2)\) term creates a threshold feature in \(x_2\), while the gate \(\tanh(\lambda x_1)\) modulates its amplitude according to \(x_1\). 
    In contrast, LA cannot realize this behavior at fixed width because its activation weights are constants.
    This input-dependent modulation is the central expressive advantage of MoA.
\end{itemize}

% \vspace{-.1cm}

\subsection{Expressive Separation for Type-II FFNs}
\label{section: theory: Type II}

% \vspace{-.05cm}

The Type-II setting is more subtle because even a fixed activation pair already introduces a multiplicative structure,
$\sigma_p(\bw^\top\bar\bx)\sigma_q(\bu^\top\bar\bx)$.

\textbf{Function classes.}
Let $\cK=\{\sigma_1,\ldots,\sigma_7\}=\{\mathrm{id},\mathrm{ReLU},\mathrm{ReLU}^2,\mathrm{LeakyReLU},\mathrm{GELU},\mathrm{SiLU},\tanh\}$ and use the augmented input $\bar\bx=(\bx,1)$.
For fixed \(p,q\in[7]\) and width \(m\), define the fixed-activation Type-II FFN class 
\[\cG_{\sigma_{p},\sigma_{q}}^{(m)}
:=\left\{
\bx\mapsto
\sum_{k=1}^m
a_k
\sigma_p(\bw_k^\top \bar\bx)
\sigma_q(\bu_k^\top \bar\bx)
:
a_k\in\mathbb R,\ 
\bw_k,\bu_k\in\mathbb R^{d+1}\right\}.\]
The qd-LA in~\eqref{eq: qd-LA: Type II} class is 
\[\cG_{\rm LA}^{(m)}
:=
\left\{
\bx\mapsto
\sum_{k=1}^m
a_k
\sum_{1\leq p\leq q\leq 7}
\alpha_{p,q}
\sigma_p(\bw_k^\top \bar\bx)
\sigma_q(\bu_k^\top \bar\bx)
:
a_k,\alpha_{p,q}\in\mathbb R,\ 
\bw_k,\bu_k\in\mathbb R^{d+1}
\right\}.\]
The qd-MoA in~\eqref{eq: qd-MoA: Type II} is
\[
\cG_{\rm MoA}^{(m)}
:=\left\{
\bx\mapsto
\sum_{k=1}^m
a_k
\sum_{1\leq p\leq q\leq 7}
\tanh(\bv_{p,q}^\top \bar\bx)
\sigma_p(\bw_k^\top \bar\bx)
\sigma_q(\bu_k^\top \bar\bx)
:
a_k\in\mathbb R,\ 
\bw_k,\bu_k,\bv_{p,q}\in\mathbb R^{d+1}
\right\}.\]
For clarity, our theory focuses on the qd-MoA with $\tanh$ gates.
The same proof strategy applies to other MoA variants and other bounded gating functions such as sigmoid.

\begin{theorem}[Strict expressive hierarchy for Type-II FFNs]
\label{thm: expressive hierarchy, Type II}
Consider \(d\ge2\). For every width \(m\ge1\),
\[
\bigcup_{p,q\in[7]}\cG_{\sigma_p,\sigma_q}^{(m)}
\subsetneq
\cG_{\rm LA}^{(m)}
\subsetneq
\cG_{\rm MoA}^{(m)}.
\]

% \vspace{-.4cm}

More precisely:

% \vspace{-.2cm}

\begin{itemize}
    \item There exists a width-\(1\) target $T_{\rm LA}(\bx)=\ReLU(x_2)(\ReLU(x_1) + \tanh(x_1))$ such that $T_{\rm LA}\in\cG_{\rm LA}^{(1)}\subset\cG_{\rm MoA}^{(1)}$, but, for every \(m\ge1\),
    $T_{\rm LA}\notin
    \bigcup\limits_{p,q\in[7]}\cG_{\sigma_p,\sigma_q}^{(m)}$.

    % \vspace{-.2cm}

    \item There exists a width-\(1\) target
    $T_{\rm MoA}(\bx)=\ReLU(x_2)\ReLU(x_1)\tanh(x_1)$ such that $T_{\rm MoA}\in\cG_{\rm MoA}^{(1)}$, but, for every \(m\ge1\),
    $T_{\rm MoA}\notin \cG_{\rm LA}^{(m)}$ and $T_{\rm MoA}\notin\bigcup\limits_{p,q\in[7]}\cG_{\sigma_p,\sigma_q}^{(m)}$.
\end{itemize}
\end{theorem}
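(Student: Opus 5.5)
The proof splits into the two inclusions and the two separations.

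\textbf{Inclusions and membership.} The inclusions are direct. Setting $\alpha_{p,q}=1$ for one pair and $0$ otherwise embeds $\cG_{\sigma_p,\sigma_q}^{(m)}$ into $\cG_{\rm LA}^{(m)}$; for $p>q$ we first swap $\bw_k$ and $\bu_k$. For LA $\subset$ MoA, take $\bv_{p,q}=(0,\dots,0,\operatorname{artanh}(\alpha_{p,q}/A))$ with $A>\max|\alpha_{p,q}|$ and rescale $a_k$ by $A$, so each gate is the constant $\alpha_{p,q}/A$.

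Membership of the targets:
\begin{itemize}
\item $T_{\rm LA}$ is one neuron with $\bw=e_2$, $\bu=e_1$ and coefficients on the pairs $(\ReLU,\ReLU)$ and $(\ReLU,\tanh)$. Both are indices $p\le q$ in the ordering $\Id,\ReLU,\ReLU^2,\dots,\tanh$.
\item $T_{\rm MoA}$ is one neuron with the single active gate $\tanh(e_1^\top\bar\bx)=\tanh(x_1)$ on the pair $(\ReLU,\ReLU)$, with $\bw=e_2$, $\bu=e_1$. All other gates are zeroed via $\bv_{p,q}=0$.
\end{itemize}

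\textbf{Separation of $T_{\rm LA}$ from fixed pairs.} I argue by contradiction, case by case over the pair $(\sigma_p,\sigma_q)$, using the regularity of $T_{\rm LA}$ near the line $\{x_1=0\}$ inside the region $\{x_2>0\}$. There $T_{\rm LA}=x_2(\ReLU(x_1)+\tanh(x_1))$ is real-analytic on each side, has a jump in $\partial_{x_1}$ across $x_1=0$, and is transcendental (non-polynomial) on $\{x_1<0\}$.

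\begin{itemize}
\item \emph{Both activations in $\{\Id,\GELU,\SiLU,\tanh\}$.} Every network is $C^1$, so it cannot reproduce the derivative kink across $x_1=0$.
\item \emph{Both activations in $\{\Id,\ReLU,\LeakyReLU,\ReLU^2\}$.} Every network is piecewise polynomial on finitely many polyhedral pieces. On the open set $\{x_1<0,x_2>0\}$ the target equals $x_2\tanh(x_1)$, which is not a polynomial on any open set. This is a contradiction.
\item \emph{Mixed pairs}, one piecewise-polynomial and one smooth activation. Each product term is, on each cell of the finite hyperplane arrangement given by the ReLU-type kinks, a polynomial times an analytic ridge function. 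So the network is analytic on the cells, and its kink set lies on finitely many hyperplanes.

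A kink along $\{x_1=0\}$ can only come from ReLU-type factors with $\bw_k\parallel e_1$. The jump in $\partial_{x_1}$ across $\{x_1=0\}$ has the form $\sum_k c_k\,\psi_k(\bu_k^\top\bar\bx)$, evaluated on $x_1=0$. Here $\psi_k$ is the smooth activation, or a polynomial when the pair is two ReLU types. The ReLU$^2$ factors contribute no jump.

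For the target, this jump equals $x_2$ for $x_2>0$ and $0$ for $x_2<0$, so it has its own kink at $x_2=0$. If the smooth factor is analytic, the jump is analytic in $x_2$ along the line, which is a contradiction. If both factors are ReLU-type, the second bullet applies.
\end{itemize}

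I would organize these cases by a lemma: on an open set where a fixed-pair network is analytic, it lies in the span of products $\sigma_p(\text{affine})\sigma_q(\text{affine})$. Then I compare the non-analytic structure of the target against it.

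\textbf{Separation of $T_{\rm MoA}$ from LA.} This is the main obstacle. LA neurons are products $h_1(\bw^\top\bar\bx)\,h_2(\bu^\top\bar\bx)$ with fixed mixtures, so they are also multiplicative, and a mere ``multiplicativity'' argument will not work.

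My plan is to restrict to the quadrant $\{x_1>0,x_2>0\}$, where $T_{\rm MoA}=x_1x_2\tanh(x_1)$. Consider first the kink along $\{x_2=0\}$ with $x_1>0$: the jump of $\partial_{x_2}T$ there equals $x_1\tanh(x_1)$. In an LA network with finitely many neurons, this jump must come from neurons with a kink direction parallel to $e_2$. The jump is therefore $\sum_k c_k\,h_{2,k}(\bu_k^\top\bar\bx)|_{x_2=0}$, a finite sum of one-variable functions $s\mapsto\sum_q\alpha_{p,q}\sigma_q(\beta_k s+\gamma_k)$, all sharing the same coefficients $\alpha$.

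So I need to show that $x_1\tanh(x_1)$ on $(0,1)$ is not a finite sum $\sum_k c_k\sum_q\alpha_q\sigma_q(\beta_k x_1+\gamma_k)$ when $\alpha$ is shared; this is where I expect the difficulty. The approach I would try first is analytic continuation. The right-hand side extends meromorphically, and the $\tanh$, $\SiLU$ and $\GELU$ pieces have specific pole or growth structures in the complex plane. The function $x\tanh x$ has simple poles at $i\pi(n+\tfrac12)$ with residues $i\pi(n+\tfrac12)$, which grow linearly in $n$. Terms $\tanh(\beta x+\gamma)$ give poles whose residues are constant, $1/\beta$. $\SiLU(\beta x+\gamma)=(\beta x+\gamma)\,\sigmoid(\beta x+\gamma)$ gives poles at $\beta x+\gamma=i\pi(2n+1)$ with linearly growing residues, but on a lattice of spacing $2\pi i/\beta$ with its own phase constraints. $\GELU$ is entire.

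Matching residues along the pole lattices should force a contradiction, or else force the kink-direction neurons to be SiLU-type. In the latter case I would redo the argument with the kink along $\{x_1=0\}$ inside $\{x_2>0\}$: the target's $\partial_{x_1}$ jump there is $0$, while shared coefficients would then force a nonzero contribution. Failing that, I would fall back on the weaker conclusion that the targets differ on an open set, which suffices for non-membership.
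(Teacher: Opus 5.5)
Your inclusions, the two membership constructions, and the separation of $T_{\rm LA}$ from the fixed pairs are correct. For the last of these you argue via the $\partial_{x_1}$-kink across $\{x_1=0\}$ and piecewise polynomiality. The paper instead uses the jump of $\partial_{x_2}$ across $\{x_2=0\}$, whose profile $(x_1)_+ + \tanh(x_1)$ is neither $C^1$ nor piecewise affine; both routes work. The genuine gap is $T_{\rm MoA}\notin\cG_{\rm LA}^{(m)}$, and there the reduction you propose is false, not merely hard. Since $\tanh x = 2\sigmoid(2x)-1$, we have $x\tanh x=\SiLU(2x)-x$. Take one LA neuron with $\bw_1^\top\bar\bx=x_2$, $\bu_1^\top\bar\bx=2x_1$, $\alpha_{\ReLU,\SiLU}=1$, $\alpha_{\ReLU,\ReLU}=-\tfrac12$, and all other $\alpha_{p,q}=0$. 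It equals $\ReLU(x_2)\bigl(\SiLU(2x_1)-\ReLU(x_1)\bigr)$, which coincides with $T_{\rm MoA}$ on the whole half-plane $\{x_1>0\}$. So $x_1\tanh(x_1)$ on $(0,1)$ \emph{is} the $\partial_{x_2}$-jump profile of an LA network with shared coefficients, and no argument confined to $x_1>0$ can separate. In particular, residue matching cannot give a contradiction: the poles $i\pi(n+\tfrac12)$ of $x\tanh x$ and their residues $i\pi(n+\tfrac12)$ are exactly those of $\SiLU(2x)$. Your fallback through the $\partial_{x_1}$-jump across $\{x_1=0\}$ is only asserted; nothing you wrote prevents contributions from different neurons kinked along that line from cancelling. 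And ``the targets differ on an open set'' is a restatement of the goal.

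The missing idea is to keep the region $x_1<0$. The jump of $\partial_{x_2}T_{\rm MoA}$ across $\{x_2=0\}$ is $B(t)=t_+\tanh t$ on the whole interval, and it vanishes identically for $t<0$. For any $f\in\cG_{\rm LA}^{(m)}$, this jump is a finite sum of at most $2m$ dictionary ridges $\sum_{\ell}\sum_{c} b_{\ell,c}\sigma_c(r_\ell t+s_\ell)$. Coefficient sharing plays no role here; the paper even lets $b_{\ell,c}$ depend on $\ell$. Near $t=0$ such a sum equals $h(t)+P_-(t)$ on $(-\varepsilon,0)$ and $h(t)+P_+(t)$ on $(0,\varepsilon)$. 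Here $h$ is real-analytic, collecting the $\Id,\GELU,\SiLU,\tanh$ terms and the ReLU-type terms kinked away from $0$. The $P_\pm$ are polynomials of degree at most $2$, coming from the ReLU-type terms kinked at $0$. Vanishing on the left forces $h=-P_-$ by analyticity, so $B=P_+-P_-$ is a polynomial of degree at most $2$ on $(0,\varepsilon)$. This contradicts $t\tanh t=t^2-\tfrac13t^4+O(t^6)$. The obstruction sits exactly at the switch-on point $x_1=0$ of the three-factor product, which is the part of the domain your plan discards.
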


The proof of Theorem~\ref{thm: expressive hierarchy, Type II} is deferred to Appendix~\ref{appendix: proof: Type II}.
The key idea parallels that of Theorem~\ref{thm: expressive hierarchy, Type I}. 
Although standard Type-II FFNs already contain a two-factor multiplicative structure, they cannot represent simple input-adaptive three-factor interactions at fixed width. 
MoA overcomes this limitation by introducing an input-dependent activation mixture, yielding a strict expressive gain over both fixed-activation and learnable-activation Type-II FFNs.

% \vspace{-.1cm}

\section{Experiments}
\label{section: experiments}

% \vspace{-.05cm}

We mainly evaluate our methods on LLM pre-training across diverse architectures, model scales, optimizers, and learning rate (lr) schedules. 
The main configurations are summarized below; additional implementation details are deferred to Appendix~\ref{appendix: experiments}.

\textbf{Models and Dataset.} 
We conduct experiments on two widely used LLM architectures: \textbf{dense} models (Llama~\citep{touvron2023llama}) and \textbf{MoE} models (LlamaMoE). 
Model sizes range \textbf{from 0.12B to 2B} parameters. 
All models are trained on a high-quality pre-training corpus.
% the high-quality FineWeb-Edu dataset~\citep{lozhkovfineweb}.

\textbf{Token Budget.}
Dense models are trained with approximately \textit{20} tokens per model parameter, following the Chinchilla-optimal regime~\citep{hoffmann2022training}.
For MoE models, we use approximately \textit{100} tokens per activated parameters, which is larger than the Chinchilla-optimal budget and is closer to industrial pre-training practice.

\begin{wraptable}{r}{0.51\textwidth}
    % \vspace{-.4cm}
    \centering
    \caption{Activation-dictionary ablation on the 0.12B dense model. Relative loss is measured against its baseline; lower is better.}
    \small
    % \vspace{-.1cm}
    \begin{tabular}{@{}c|c|c|c@{}}
    \hline\hline
        Type-I models  & dictionary & rel. loss & \texttt{max\_lr} \\ \hline
        baseline (\texttt{r$^\texttt{2}$}) & - & 0.000 & \texttt{2e-3} \\ \hline
        LA~\eqref{eq: LA: Type-I} & \texttt{gsr$^\texttt{2}$lr} & +0.003 & \texttt{4e-3} \\ \hline
        MoA~\eqref{eq: MoA: Type-I} & \texttt{gsr$^\texttt{2}$lr} & \textbf{-0.015} & \texttt{4e-3} \\ \hline\hline
    \end{tabular}
    
    \vspace{.2cm}
    
    \begin{tabular}{@{}c|c|c|c@{}}
    \hline\hline
        Type-II models  & dictionary & rel. loss & \texttt{max\_lr} \\\hline
        baseline (SwiGLU) & - & 0.000 & \texttt{2e-3} \\ \hline
        one-LA~\eqref{eq: one-LA: Type-II} & \texttt{gsr$^\texttt{2}$ltr} & -0.015 & \texttt{3e-3} \\ \hline
        bi-LA~\eqref{eq: bi-LA: Type-II} & \texttt{gsr$^\texttt{2}$ltr} & \textbf{-0.016} & \texttt{4e-3} \\ \hline
        qd-LA~\eqref{eq: qd-LA: Type II} & \texttt{gsr$^\texttt{2}$} & -0.004 & \texttt{3e-3} \\ \hline
        one-MoA~\eqref{eq: one-MoA: Type-II} & \texttt{gsr$^\texttt{2}$ltr} & -0.011 & \texttt{3e-3} \\ \hline
        bi-MoA~\eqref{eq: bi-MoA: Type-II} & \texttt{gsr$^\texttt{2}$ltr} & \textbf{-0.016} & \texttt{3e-3} \\ \hline
        qd-MoA~\eqref{eq: qd-MoA: Type II} & \texttt{gsr$^\texttt{2}$} & -0.008 & \texttt{3e-3} \\ 
    \hline\hline
    \end{tabular}
    \label{tab: activation dictionary}
    % \vspace{-.2cm}
\end{wraptable}
\textbf{Optimization.} We evaluate our method under multiple training configurations, including different optimizers and lr schedules.

% \vspace{-.1cm}

\begin{itemize}
    \item \textbf{Dense models.} We use AdamW as the baseline optimizer~\citep{kingma2014adam,loshchilov2017decoupled}. Following standard Llama pre-training practice~\citep{touvron2023llama}, we set $\beta_1=0.9$, $\beta_2=0.95$, weight decay $\lambda=0.1$, and gradient clipping threshold $1.0$. We use the \texttt{cos} lr schedule: a linear warm-up to the peak lr \texttt{lr\_max}, followed by cosine decay to the terminal lr.
    
    \item \textbf{MoE models.} We use the \texttt{Muon} optimizer~\citep{jordan2024muon}, which has recently shown strong efficiency and scalability in LLM pre-training~\citep{liu2025muon}. We adopt the implementation of~\citep{liu2025muon} as the Muon baseline. For the lr schedule, we use warmup-stable-decay (wsd) schedule~\citep{zhai2022scaling,hu2024minicpm}, which includes a linear warm-up to peak \texttt{lr\_max}, followed by a stable phase where lr remains at \texttt{lr\_max}, and then a linear decay to zero.
\end{itemize}

% \vspace{-.1cm}

% \vspace{-.05cm}

\textbf{Lr tuning.}
For the ablations in Section~\ref{subsection: strategy}, we tune \texttt{lr\_max} separately for each model to fairly assess each design variant.
For the larger-scale experiments of MoE models in Section~\ref{subsection: main results}, we first tune \texttt{lr\_max} for the baseline and then use the same value for both the baseline and its MoA variant.

\textbf{Initialization} of MoA and LA parameters. 
For LA models, following the strategy for linear combinations of values~\citep{zhou2025value}, we initialize the linear-combination coefficients to $1$.
For MoA models, we initialize the additional gating parameters from $\cN(0,0.02^2)$ for training stability.

% \vspace{-.05cm}

\subsection{Design study on 0.12B dense models}
\label{subsection: strategy}

% \vspace{-.05cm}

\begin{wraptable}{r}{0.45\textwidth}
    \vspace{-.5cm}
    \centering
    \caption{Gating-function ablation of MoA on the 0.12B dense model. Relative loss is measured against the corresponding baseline in Table~\ref{tab: activation dictionary}; lower is better.}
    \small
    % \vspace{-.1cm}
    \begin{tabular}{@{}c|c|c|c@{}}
    \hline\hline
        Type-I models  & gating & rel. loss & \texttt{max\_lr} \\ \hline
        MoA~\eqref{eq: MoA: Type-I} & $\softmax$ & -0.015 & \texttt{4e-3} \\ \hline
        MoA~\eqref{eq: MoA: Type-I} & $\tanh$ & +0.127 & \texttt{2e-3} \\ \hline
        MoA~\eqref{eq: MoA: Type-I} & $\sigmoid$ & \textbf{-0.016} & \texttt{3e-3} \\ \hline\hline
    \end{tabular}
    
    \vspace{.2cm}
    
    \begin{tabular}{@{}c|c|c|c@{}}
    \hline\hline
        Type-II models  & gating & rel. loss & \texttt{max\_lr} \\\hline
        bi-MoA~\eqref{eq: bi-MoA: Type-II} & $\softmax$ & -0.016 & \texttt{3e-3} \\ \hline
        bi-MoA~\eqref{eq: bi-MoA: Type-II} & $\tanh$ & -0.002 & \texttt{3e-3} \\ \hline
        bi-MoA~\eqref{eq: bi-MoA: Type-II} & $\sigmoid$ & \textbf{-0.029} & \texttt{4e-3} \\
    \hline\hline
    \end{tabular}
    \label{tab: gating function}
    % \vspace{-2.cm}
\end{wraptable}

Exhaustively evaluating all design combinations is computationally expensive.
We therefore perform a step-by-step ablation on the 0.12B dense model: we first search activation dictionaries for LA and MoA, and then examine the MoA gating function.

\textbf{Activation dictionary.} 
We denote \texttt{g=$\GELU$}, 
\texttt{s=$\SiLU$}, 
\texttt{r$^\texttt{2}$=$\ReLU^2$}, \texttt{l=$\LeakyReLU$}, \texttt{t=$\tanh$}, \texttt{r=$\ReLU$}. 
We ablate the activation dictionary by progressively adding candidate activations from left to right and tune the peak lr for each model.
For MoA variants, we use softmax gating in this ablation, following the common choice in MoE models.
For the Type-I baseline, we use $\ReLU^2$ following recent practice~\citep{zhang2024relu}.

Table~\ref{tab: activation dictionary} reports the best dictionary, terminal relative validation loss, and best peak lr for each model. The results show three main trends:

\begin{itemize}
    \item \textbf{Type-I models.} 
    Both LA and MoA select \texttt{gsr$^\texttt{2}$lr} as the best dictionary.
    MoA improves over the $\ReLU^2$ baseline, whereas LA slightly underperforms it.
    In our dictionary-growth ablation, most activation functions contribute positively, while adding $\tanh$ degrades performance.
    We further standard FFN ablation shows that using $\tanh$ alone performs worse than $\ReLU^2$ alone by 0.132 relative loss, suggesting that $\tanh$ is unsuitable for Type-I FFNs in LLMs.

    \item \textbf{Type-II models.} 
    The one-sided and bi-sided variants favor the largest dictionary, whereas the quadratic variants favor the smaller dictionary \texttt{gsr$^\texttt{2}$}.
    We conjecture that quadratic variants already introduce rich pairwise nonlinear interactions, making a smaller dictionary sufficient.
    Notably, all Type-II LA and MoA variants improve over the SwiGLU baseline.
    
    \item \textbf{Lr tolerance.}
    For both Type-I and Type-II models, LA and MoA variants \textit{tolerate larger peak lrs} than their baselines. This suggests that hybrid activations may improve training stability, although a detailed investigation is left for future work.
\end{itemize}

\begin{wrapfigure}{r}{0.32\textwidth}
    \centering
    \vspace{-.3cm}
    \includegraphics[width=0.29\textwidth]{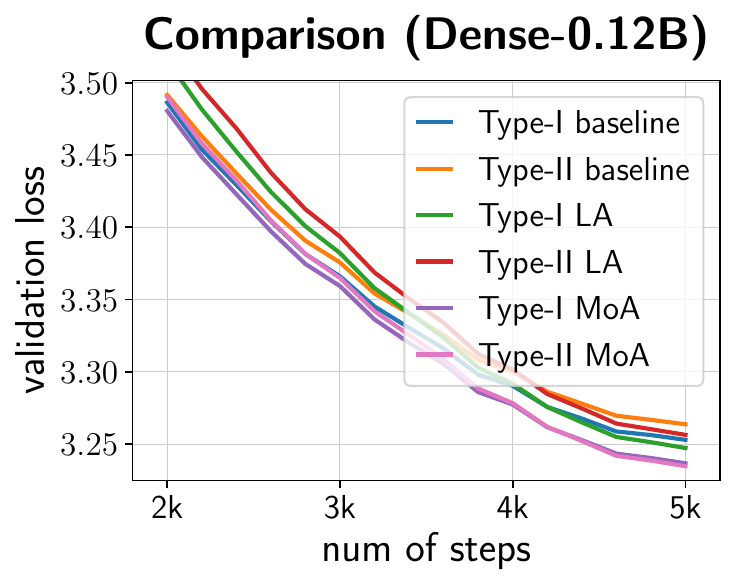}
    % \vspace{-.2cm}
    \caption{Validation loss after selecting the best activation dictionary and gate function for each 0.12B dense variant.}
    \label{fig: comparison}
    % \vspace{-.7cm}
\end{wrapfigure}

\textbf{Gating function in MoA.} 
We next ablate the gating function in MoA, comparing $\softmax$, $\sigmoid$, and $\tanh$.
For Type-I MoA, we directly compare all three choices.
For Type-II MoA, we focus on bi-MoA, which performs best in Table~\ref{tab: activation dictionary}.
The activation dictionaries are fixed to the best choices in Table~\ref{tab: activation dictionary}.
As shown in Table~\ref{tab: gating function}, \textit{$\sigmoid$ gating performs best} for both Type-I and Type-II models.
% The relatively weak performance of $\tanh$ gating is therefore not contradictory to the theoretical result.

We clarify that this empirical result does not contradict our theoretical analysis with $\tanh$ gates. The theory establishes an expressivity separation, but it does not characterize optimization dynamics.
In practice, $\tanh$ and $\sigmoid$ gates can induce different training behavior, especially near initialization: $\tanh(0)=0$, whereas $\sigmoid(0)=1/2$.
Thus, $\sigmoid$ gates may provide stronger initial signal propagation and easier optimization, even though $\tanh$ gates are sufficient for the expressivity result.

\textbf{Final comparison.}
Based on the above ablations, we select the best activation dictionary and gating function for each variant.
We then compare six models: the Type-I baseline, Type-I LA, and Type-I MoA, Type-II baseline, Type-II LA (bi-LA), and Type-II MoA (bi-MoA).
Figure~\ref{fig: comparison} shows their validation loss curves. 
\textit{MoA performs best for both Type-I and Type-II FFNs}, substantially outperforming the corresponding LA variants and baselines.
Moreover, \textit{Type-II MoA slightly outperforms Type-I MoA}, achieving the best overall performance.

% \vspace{-.05cm}

\subsection{Main Language Modeling Results}
\label{subsection: main results}

% \vspace{-.05cm}

Because the design space contains many variants, we focus large-scale experiments on the most promising class identified in the 0.12B ablation study.
Specifically, Type-II MoA achieves the best overall performance on the 0.12B dense model.
We therefore compare Type-II MoA variants, including one-MoA, bi-MoA, and qd-MoA, against the standard SwiGLU-based Llama baseline.

\begin{figure}[!htp]
    \centering
    % \vspace{-.1cm}
    \includegraphics[width=0.3\textwidth]{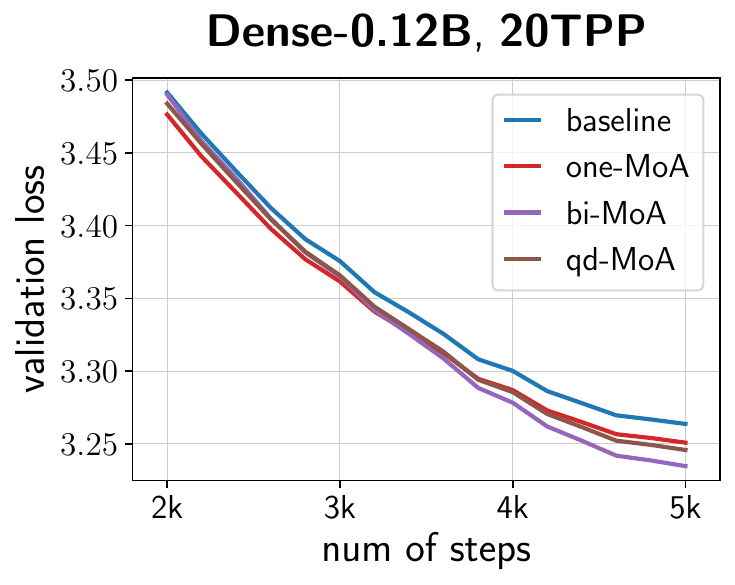}
    \includegraphics[width=0.3\textwidth]{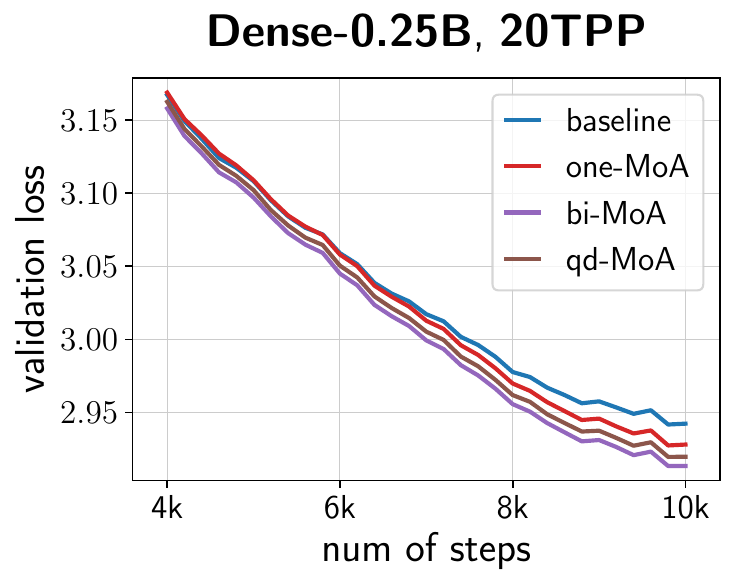}
    \includegraphics[width=0.3\textwidth]{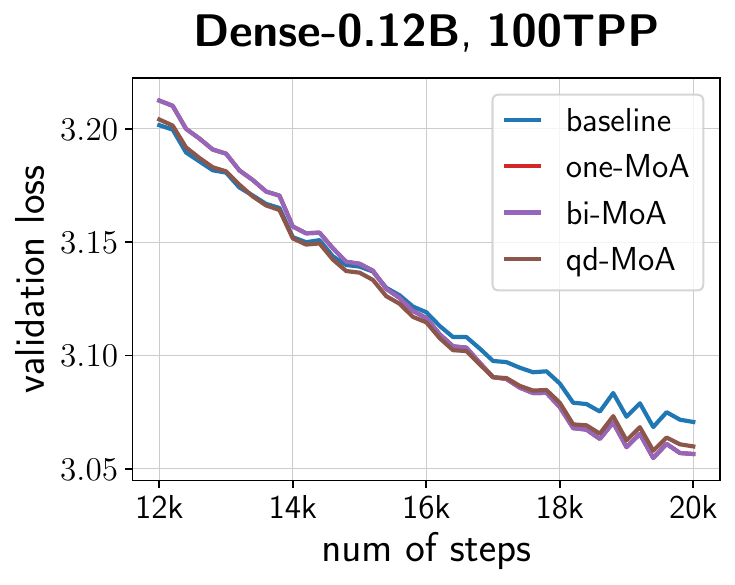}
    % \vspace{-.1cm}
    \caption{Comparison of MoA variants and the Llama baseline on dense models of 0.12B and 0.25B. MoA consistently achieves lower terminal loss across model scales.}
    % \vspace{-.1cm}
\label{fig: dense: unified}
\end{figure}

\textbf{Results for dense models.}
Figure~\ref{fig: dense: unified} compares one-MoA, bi-MoA, and qd-MoA with the Llama baseline on 0.12B and 0.25B dense models trained with 20 tokens per parameter (TPP). We also include a 0.12B setting trained with 100 TPP. 
% , and 0.5B.
Across these settings, (i) MoA consistently achieves lower terminal loss than the well-tuned Llama baseline;
(ii) Moreover, MoA variant can tolerate larger peak lr: 
for the 0.12B model trained with 20 TPP, the best peak lr is \texttt{2e-3} for the baseline and \texttt{3e-3} for all MoA variants; 
for the 0.12B model trained with 100 TPP, the best peak lr is \texttt{4e-3} for the baseline, \texttt{5e-3} for one-MoA and qd-MoA, and \texttt{6e-3} for bi-MoA.
% For the 0.12B and 0.25B models, bi-MoA performs best, whereas all three MoA variants yield comparable improvements over the Llama baseline at 0.5B.

\begin{figure}[!htb]
    % \vspace{-.2cm}
    \centering
    \includegraphics[width=0.3\linewidth]{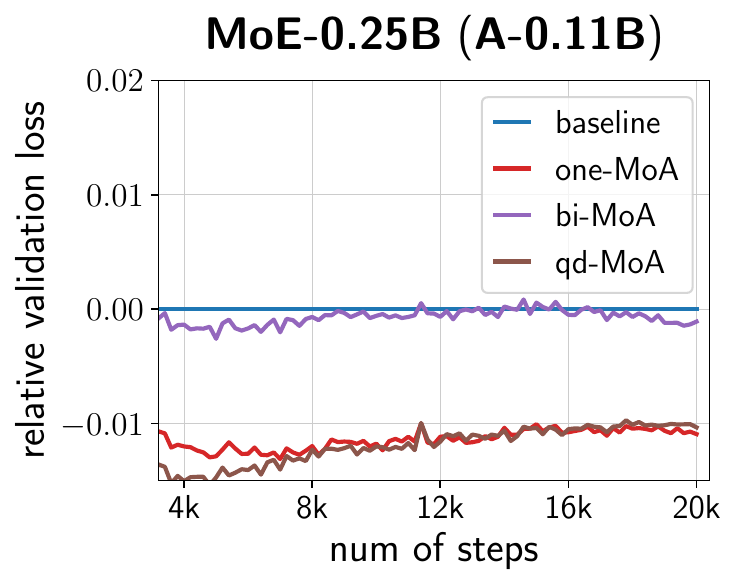}
    \includegraphics[width=0.3\linewidth]{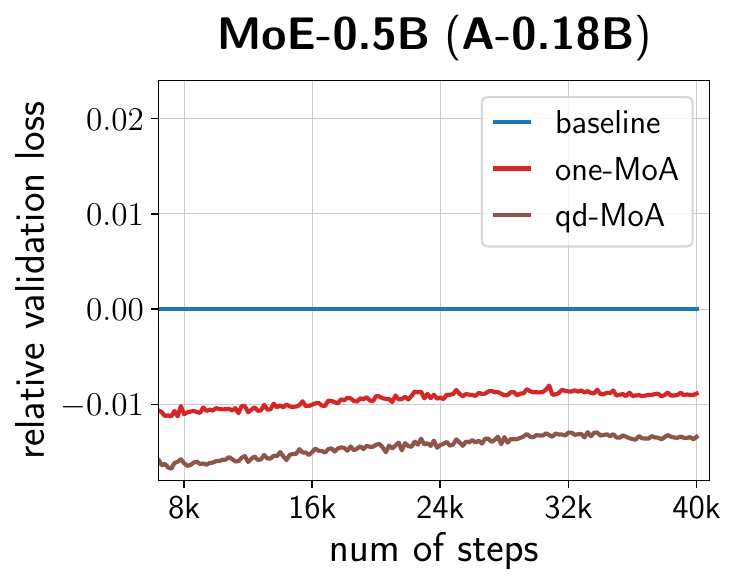}
    \includegraphics[width=0.3\linewidth]{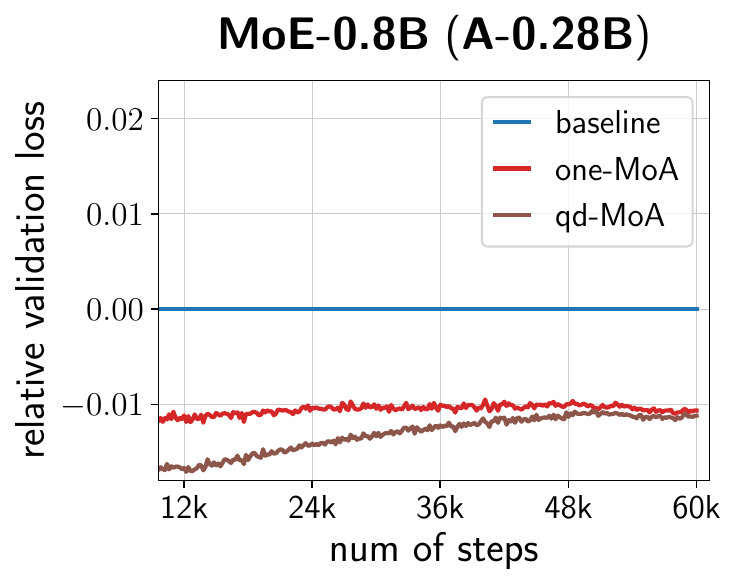}
    \includegraphics[width=0.3\linewidth]{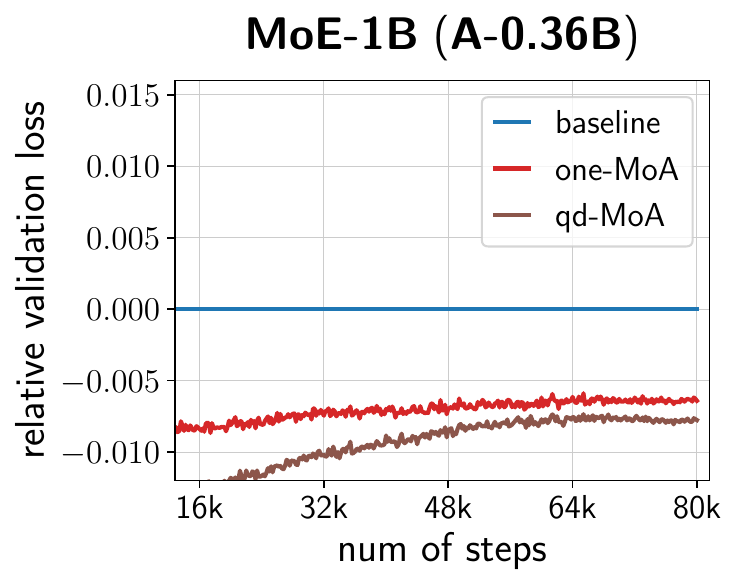}
    \includegraphics[width=0.3\linewidth]{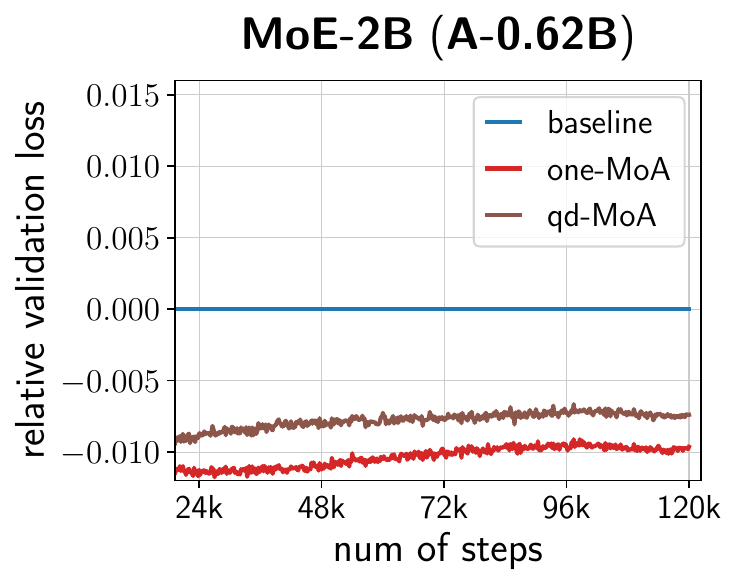}
    % \hspace{-.02cm}
    % \includegraphics[width=0.25\linewidth]{figures/scaling_moe.pdf}
    \includegraphics[width=0.297\linewidth]{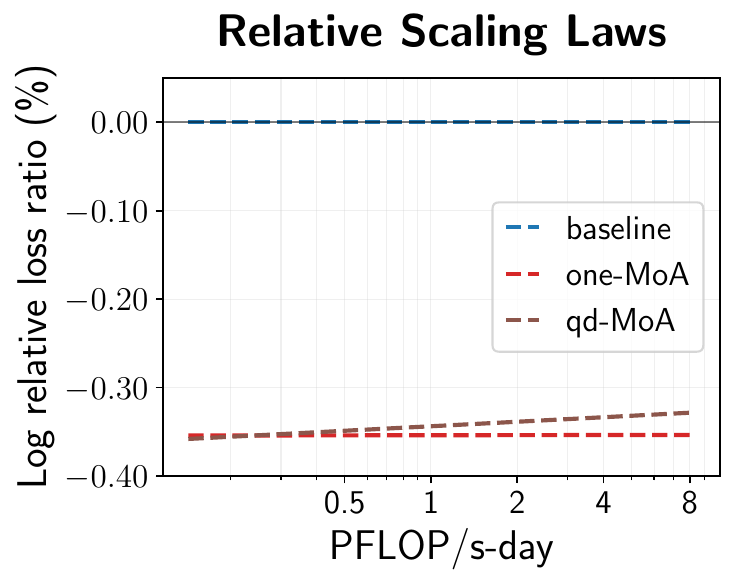}
    % \hspace{.67cm}
    % \vspace{-.1cm}
    \caption{Comparison between MoA variants and the LlamaMoE baseline on MoE models of different sizes. MoA consistently achieves lower terminal loss and more favorable scaling law.}
    % \vspace{-.2cm}
\label{fig: moe: unified}
\end{figure}

We next evaluate MoA on MoE models. Although the dense-model experiments show that MoA can tolerate larger lr, we avoid additional lr tuning for MoA for simplicity. 
For each MoE scale, we first tune the peak lr for the LlamaMoE baseline and then use the same value for the corresponding MoA variants. 
We use the Muon optimizer to obtain strong MoE baselines. 
MoA introduces a set of gating parameters, which can be concatenated into a parameter matrix and optimized with either Muon or AdamW.
Controlled experiments show that these two choices yield nearly identical performance. We therefore use AdamW for these parameters.

\textbf{Results for MoE models.}
Figure~\ref{fig: moe: unified} compares MoA variants with the LlamaMoE baseline at MoE model sizes 0.25B (A0.11B), 0.5B (A0.18B), 0.8B (A0.28B), 1B (A0.38B), and 2B (A0.62B). 
In the 0.25B pilot experiment, one-MoA and qd-MoA clearly outperform the baseline, whereas bi-MoA performs similarly to the baseline. 
We therefore focus on one-MoA and qd-MoA in larger-scale experiments. 
Both variants consistently achieve \textbf{lower terminal loss} than the well-tuned Muon-trained baseline across all settings, with gains exceeding $0.01$ in most experiments.
Notably, these gains are obtained over strong Muon-trained baselines, indicating that the improvement is nontrivial.
To examine scaling behavior, the last panel of Figure~\ref{fig: moe: unified} reports \textbf{scaling laws} of one-MoA, qd-MoA, and the baseline. The performance gap remains stable across model sizes, and the  scaling curves between one-MoA and the baseline are \textbf{nearly parallel}, suggesting that the gains from one-MoA may persist at larger scales.

% % \vspace{-.2cm}

\begin{wraptable}{r}{0.50\textwidth}
    \centering
    \vspace{-.5cm}
    \caption{Zero-shot evaluation results of pre-trained LlamaMoE-2B models. The best score in each row is bolded.}
    % \vspace{-.1cm}
    \small
    \begin{tabular}{@{}c|c|c|c@{}}
    \hline\hline
         & LlamaMoE & one-MoA & qd-MoA \\ \hline
       \texttt{ARC-C}      & 36.50 & \textbf{36.86} & 36.52 \\ \hline
       \texttt{HellaSwag}  & 43.31 & 44.54 & \textbf{44.91} \\ \hline
       \texttt{OpenBookQA} & \textbf{30.20} & 29.80 & \textbf{30.20} \\ \hline
       \texttt{WinoGrande} & 58.80 & \textbf{60.22} & \textbf{60.22} \\ \hline
       \texttt{Avg.}       & 42.20 & 42.86 & \textbf{42.96} \\ \hline\hline
    \end{tabular}
    \label{tab: downstream}
    % \vspace{-.2cm}
\end{wraptable}
\textbf{Downstream evaluation.} 
We further evaluate the zero-shot performance on common benchmarks, including ARC-C~\citep{yadav2019quick}, HellaSwag~\citep{zellers2019hellaswag}, OpenBookQA~\citep{mihaylov2018can}, and WinoGrande~\citep{sakaguchi2021winogrande}. 
Table~\ref{tab: downstream} reports the results. 
Under the same token budget, LlamaMoE-2B with one-MoA or qd-MoA outperforms the baseline on most tasks and improves the average score, demonstrating stronger downstream performance.

% % \vspace{-.05cm}

% \subsection{Robustness across Datasets and Configurations}
% \label{subsection: other setting}

% Dense, 100 times

% The Pile dataset

% \begin{figure}[!htp]
%     \centering
%     \includegraphics[width=0.3\linewidth]{figures/dense_0.12B_pile.pdf}    \includegraphics[width=0.3\linewidth]{figures/moe_0.25B_pile.pdf}    \includegraphics[width=0.3\linewidth]{figures/moe_0.5B_pile.pdf}
%     \caption{Caption}
%     \label{fig:placeholder}
% \end{figure}

% Dense; MoE

\subsection{Overhead Analysis}
\label{subsection: overhead}

We evaluate the parameter and computational overhead of our method. Experimental details are provided in Appendix~\ref{appendix: subsection: ablation}.

\begin{wrapfigure}{r}{0.33\textwidth}
     \vspace{-.5cm}
     \includegraphics[width=0.29\textwidth]{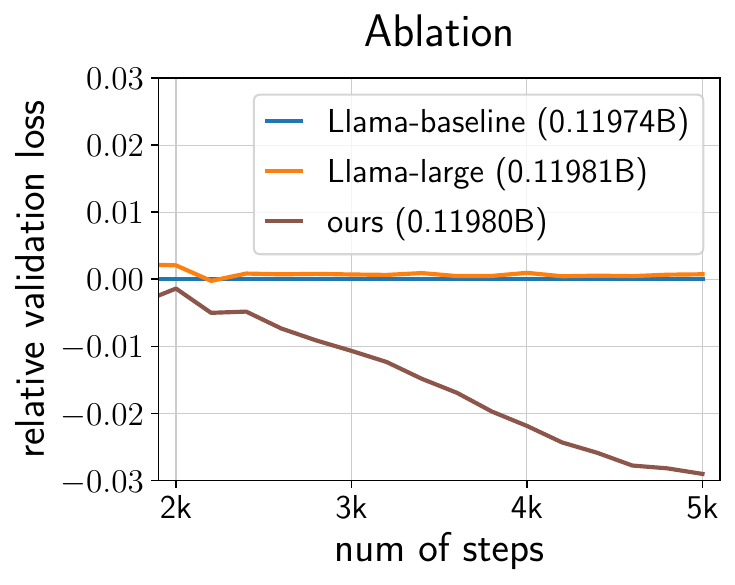}
    \vspace{-.2cm}
    \caption{Ablation on parameter count.}
    \vspace{-.4cm}
    \label{fig: ablation model size}
\end{wrapfigure}
\textbf{Parameter count.} 
Since the activation dictionary size $|\cK|$ is small and independent of the Transformer hidden dimension $d$, the additional parameters introduced by MoA scale as $\cO(d)$, which is \textit{negligible} compared with the dominant Transformer parameter scale $\cO(d^2)$.
For rigor, we conduct a parameter-controlled ablation.
Among our variants, the largest MoA variant, bi-MoA, has 0.11980B parameters, only $5.01\times10^{-4}$ more than the Llama baseline (0.11974B). 
We therefore increase the baseline hidden size to match 0.11980B parameters and denote the resulting model as Llama-large. 
As shown in Figure~\ref{fig: ablation model size}, MoA reduces terminal loss by 0.029, whereas Llama-large yields almost no gain.

\begin{wraptable}{r}{0.55\textwidth}
    % \vspace{-.5cm}
    \centering
    \caption{Runtime and memory overhead on Dense-0.5B model. Ratios are relative to the corresponding baseline.}
    \small
    \setlength{\tabcolsep}{2.5pt}
    % \vspace{-.1cm}
    \begin{tabular}{c|c|c}
    \hline\hline
        model & wall-clock time (ms) & memory usage (MiB) \\ \hline
        Type-I baseline & 190 & 26571 \\
        Type-I MoA   & 196 { ($1.03\times$)} & 26627 { ($1.00\times$)} \\ \hline
        Type-II baseline & 196 & 28759 \\
        Type-II MoA   & 222 { ($1.13\times$)} & 28873 { ($1.00\times$)} \\
    \hline\hline
    \end{tabular}
    \label{tab: overhead}
    % \vspace{-.5cm}
\end{wraptable}
\textbf{FLOPs and memory.}
The additional computation of MoA mainly consists of elementwise activation evaluations and lightweight gating operations, both of which scale as $\cO(d)$ per token because $|\cK|=\cO(1)$.
This overhead is negligible compared with the $\cO(d^2)$ cost of FFN linear projections.
We measure practical overhead on Dense-0.5B model with standard \texttt{torch.compile}.
As shown in Table~\ref{tab: overhead}, MoA incurs only a $1.03\sim1.13\times$ increase in wall-clock time, while memory usage remains nearly unchanged.

\subsection{Generalization to Vision Task}
\label{subsection: vision}

The preceding experiments focus on LLM pre-training. We further evaluate whether MoA generalizes to another pre-training setting, namely vision pre-training.
The experimental details are prvided in Appendix~\ref{appendix: subsection: vision}.

\begin{wrapfigure}{r}{0.32\textwidth}
    \centering
    \vspace{-.3cm}
    \includegraphics[width=0.3\textwidth]{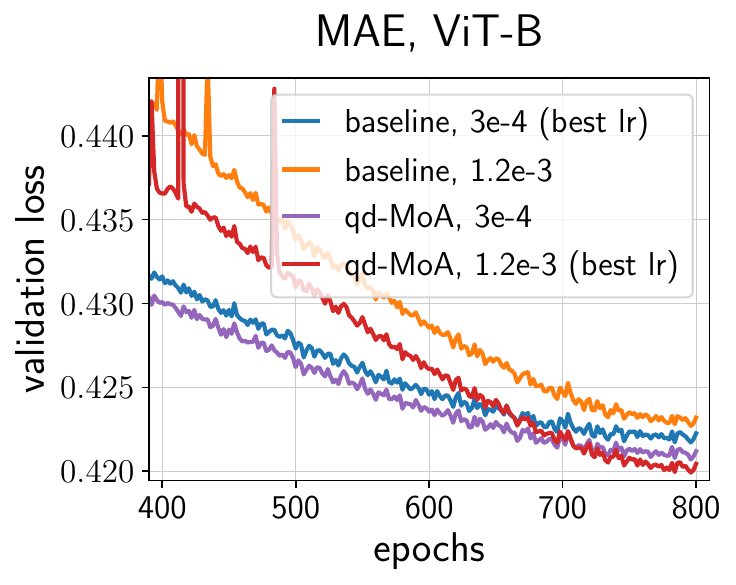}
    \vspace{-.1cm}
    \caption{Validation reconstruction loss under MAE pre-training. MoA still achieves lower loss.}
    \label{fig: vision}
    \vspace{-.2cm}
\end{wrapfigure}
\textbf{Self-supervised vision pre-training.} 
We evaluate FFN variants under the Masked Autoencoder (MAE)~\citep{he2022masked} framework on a large-scale image corpus
% ImageNet-1K 
using ViT-Base/16. 
The baseline employs a standard SwiGLU FFN, while our method replaces it with qd-MoA~\eqref{eq: qd-MoA: Type II}.
All models are pre-trained for 800 epochs with a global batch size of 4096, a mask ratio of 0.75, and the standard MAE normalized reconstruction loss. 
Optimization and augmentation settings follow the MAE ViT-B protocol.
% and are summarized in Appendix~\ref{appendix: subsection: vision}. 
The peak lr is tuned separately for the baseline and qd-MoA.
In Figure~\ref{fig: vision}, we report validation reconstruction losses to compare convergence.
The results are consistent with those in LLM pre-training:
(i) under the same lr, qd-MoA achieve lower validation loss than the baseline; (ii) the training of qd-MoA tolerates a larger lr, with the best peak lr increasing from \texttt{3e-4} for the baseline to \texttt{1.2e-3} for qd-MoA.

\section{Conclusion}
\label{section: conclusion}

We introduced MoA, a simple token-adaptive FFN design that mixes a dictionary of activation functions through lightweight input-dependent gates while sharing the same linear projections. 
Theoretically, we prove strict finite-width expressive separations showing that MoA strictly contains learnable activations and fixed-activation FFNs. 
Empirically, MoA consistently improves pre-training loss and scaling behavior across dense and MoE language models. 
These results suggest that input-dependent activation mixing is an effective and parameter-efficient mechanism for improving FFN expressivity in modern LLMs.
Future work includes applying distinct token-adaptive nonlinear mixing mechanisms across input dimension to further improve FFN adaptivity and expressivity.

\section*{Acknowledgment}

We thank Guang Shi, Prof. Weinan E, and Guhao Feng for helpful discussions.

\clearpage

% \bibliographystyle{plainnat}
% \bibliography{ref}

\clearpage

\beginappendix

\startcontents[sections]
\printcontents[sections]{l}{1}{\setcounter{tocdepth}{2}}

\vspace{1.cm}

\section{Experimental Details}
\label{appendix: experiments}

% All Experiments for dense models are conducted on A100 80G GPUs; all experiments for MoE models are conducted on H100 80G GPUs.

\subsection{Experimental Details for Section~\ref{subsection: main results}}
\label{appendix: subsection: main results}

\textbf{Models.} We utilize two popular classes of LLM models for our pre-training experiments:
 \begin{itemize}    
    \item \textbf{Dense models.} 
    Llama~\citep{touvron2023llama} is a dense decoder-only Transformer architecture that uses Rotary Positional Encoding (RoPE)~\citep{su2024roformer}, SwiGLU,  RMSNorm, and a Pre-Norm design. We set $d_{\rm FFN}={\rm int}(8d_{\text{model}}/3)$.
    % We pre-train Llama models ranging from 0.12B to 0.5B parameters. 
    We pre-train Llama models of 0.12B, 0.25B, and 0.5B parameters, with 20 TPP or 100 TPP. 
    Detailed configurations are provided in Table~\ref{table: dense model config and max lrs}.
    
    \item \textbf{MoE models.} 
    LlamaMoE is a decoder-only mixture-of-experts architecture based on Llama. 
    Each model uses 32 sparse experts, activates 4 sparse experts per token, and includes one shared expert. 
    Following QwenMoE~\citep{yang2024qwen2technicalreport}, the hidden dimension of the shared expert is ${\rm int}(8d_{\rm model}/3)$, whereas that of each
    sparse expert is ${\rm int}(2d_{\rm model}/3)$.
    We pre-train LlamaMoE models ranging from 0.25B to 2B parameters. 
    Detailed configurations are provided in Table~\ref{table: moe model config and max lrs}.
\end{itemize}

\begin{table}[!ht]
	% \vspace{-20pt}
		\centering
		%\vspace{-18pt}
        \renewcommand{\arraystretch}{1.25}
		\caption{\small Model configurations and optimally-tuned peak lr's for dense models.}
		\label{table: dense model config and max lrs}
		\begin{small}
	 % \addtolength{\tabcolsep}{-3pt} 
		\begin{tabular}{l|c|c|c|c|c|c}
		\hline 
		Acronym & Size & $d_{\mathrm{model}}$ & n$\_$head & n$\_$layers & TPP & \texttt{lr\_max} (AdamW, cos) \\\hline\hline 
	Dense-0.12B & 0.12B & 768 & 12 & 6 & 20 & 2e-3 \\
    Dense-0.12B & 0.12B & 768 & 12 & 6 & 100 & 4e-3 \\
    Dense-0.25B & 0.25B & 1024 & 16 & 12 & 20 & 2e-3 \\ 
    Dense-0.5B & 0.48B & 1280 & 20 & 18 & 20 & 2e-3 
    \\\hline 
	\end{tabular}
	\end{small}
		% \vspace{-5pt}
\end{table}

\begin{table}[!ht]
	% \vspace{-20pt}
		\centering
		%\vspace{-18pt}
        \renewcommand{\arraystretch}{1.25}
		\caption{\small Model configurations and optimally-tuned peak lr's for MoE models.}
		\label{table: moe model config and max lrs}
		\begin{small}
	 % \addtolength{\tabcolsep}{-3pt} 
		\begin{tabular}{l|c|c|c|c|c|c}
		\hline 
		Acronym & Size & Activated Size & $d_{\mathrm{model}}$ & n$\_$head & n$\_$layers& \texttt{lr\_max} (Muon, wsd) \\
		\hline\hline 
    MoE-0.25B & 0.25B & 0.11B & 640 & 10 & 6 & 2e-3 \\
    MoE-0.5B & 0.48B & 0.18B & 768 & 12 & 9 & 1e-3 \\
	MoE-0.8B & 0.80B & 0.28B & 960 & 15 & 10 & 1e-3 \\
	MoE-1B & 1.06B & 0.38B & 1024 & 16 & 12 & 1e-3 \\
	MoE-2B & 2.00B & 0.62B & 1280 & 20 & 15 & 6e-4 \\
	 \hline 
		\end{tabular}
		\end{small}
		% \vspace{-5pt}
\end{table}

\textbf{Token Budget.}
Unless otherwise specified, dense models are trained with a token budget of approximately \textit{20 times} (Chinchilla-optimal regime~\citep{hoffmann2022training}) or \textit{100 times} the number of model parameters.
For MoE models, we use a token budget of approximately \textit{100 times} the number of activated parameters, which is larger than the Chinchilla-optimal budget and is closer to industrial pre-training practice. For each experiment, we use a sequence length of 1,024 and a batch size of 480, following~\citep{Karpathy2022nanogpt}.

\textbf{Optimizers.}

\begin{itemize}
    \item \textbf{AdamW.} For dense models, We use AdamW as the default baseline optimizer~\citep{kingma2014adam,loshchilov2017decoupled}. Following standard Llama pre-training practice~\citep{touvron2023llama}, we set $\beta_1=0.9$, $\beta_2=0.95$, weight decay $\lambda=0.1$, and the gradient clipping threshold to $1.0$.
    
    \item \textbf{Muon.} For MoE models, we use Muon optimizer, following the setup~\citep{jordan2024muon}: Muon is applied only to 2D matrix blocks in Transformer layers, while AdamW is used for all other parameters,
    including scale vectors, embedding layer, and the
    output layer. 
    Additionally, following~\citep{liu2025muon}, we further use: (i) per-parameter update scaling, with learning-rate multiplier $c=0.2\sqrt{\max\{m,n\}}$ for Muon blocks of shape $\mathbb{R}^{m\times n}$, so that the update RMS norm matches that of AdamW;. (ii) Nesterov momentum with coefficient $\theta_{\text{muon}}=0.95$; (iii) weight decay $\lambda=0.1$. 
\end{itemize}

\textbf{Lr schedules.}
\begin{itemize}
    \item \textbf{cos.} For dense models, we use the \texttt{cos} lr schedule: a linear warm-up to the peak learning rate \texttt{lr\_max}, followed by cosine decay to the terminal \texttt{lr\_min}=\texttt{lr\_max}/20.

    \item \textbf{wsd.} For MoE models, we use the \texttt{wsd} schedule, we use a linear warmup to the peak learning rate \texttt{lr\_max}, followed by a stable phase in which the learning rate remains at \texttt{lr\_max} until $80\%$ of the total training steps, and finally a linear decay to zero. 
\end{itemize}

\textbf{Lr tuning.} 
For 0.12B and 0.25B dense models, we tune \texttt{lr\_max} for all baselines and variants.
For MoE models and 0.5B dense model, we first tune \texttt{lr\_max} for the baseline model and then use the tuned value for both the baseline and the corresponding improved model. 
The grid search for \texttt{lr\_max} is performed over $\{$\texttt{6e-4}, \texttt{1e-3}, \texttt{2e-3}, \texttt{3e-3}, \texttt{4e-3}, \texttt{5e-3}, \texttt{6e-3}$\}$.
The selected peak lr's for baselines are reported in Tables~\ref{table: dense model config and max lrs} and~\ref{table: moe model config and max lrs}.

For Figure~\ref{fig: moe: unified}, we report relative losses for MoE models at different parameter count. 
For the scaling law panel, we first fit the baseline loss $\cL_{\text{baseline}}$ and our method's loss $\cL_{\text{method}}$ as functions of model size, and then plot the log relative improvement, $\log(\cL_{\text{baseline}}/\cL_{\text{method}})$ for each method.

\subsection{Experimental Details for Section~\ref{subsection: strategy}}

We conduct the ablation study based on the Llama-0.12B configuration described in Appendix~\ref{appendix: subsection: main results}.
For Type-II FFNs, we set the FFN hidden width to $\textrm{int}(8d/3)$, while for Type-I FFNs, we set it to $4d$; the resulting parameter counts are nearly identical.
To fairly compare different designs, we tune \texttt{lr\_max} separately for each FFN type and variant. 
The selected values of \texttt{lr\_max} are reported in the tables in Section~\ref{subsection: strategy}. All other training configurations follow those used for Llama-0.12B in Appendix~\ref{appendix: subsection: main results}.

\subsection{Experimental Details for Section~\ref{subsection: overhead}}
\label{appendix: subsection: ablation}

\textbf{Parameter count.}
The baseline model uses the same training configuration as Llama-0.12B in Appendix~\ref{appendix: subsection: main results}.
For the model-size ablation, we construct a larger Llama variant based on Llama-0.12B.
Directly increasing the hidden dimension would require the dimension to remain compatible with both RoPE and the attention heads, leading to a parameter count much larger than the target. 
Therefore, we instead increase the FFN hidden dimension from $d_{\rm FFN}$ to $d_{\rm FFN}+5$, which closely matches the target parameter count.
We retune the learning rate for this larger model, and the selected value remains \texttt{2e-3}.

\textbf{Computational overhead.}
We train Llama-0.5B model with standard \texttt{torch.compile}, under the setting in Appendix~\ref{appendix: subsection: main results}. 
The results in Table~\ref{tab: overhead} are averaged over 10 training steps after warmup.

\subsection{Experimental Details for Section~\ref{subsection: vision}}
\label{appendix: subsection: vision}

The experiments are performed on a large-scale image corpus using a ViT-Base/16 encoder and the standard MAE decoder with hidden dimension $512$ and depth $8$. 
% ImageNet-1K
We follow the MAE pre-training recipe~\citep{he2022masked}: the mask ratio is set to $0.75$, and the training objective is the normalized patch reconstruction loss, implemented as normalized patch MSE.
For data augmentation, we use random resized cropping with scale range $(0.2,1.0)$ and random horizontal flipping. 

The baseline replaces the standard MAE MLP with a Llama-style SwiGLU FFN, whose hidden width follows the Llama convention $8d/3$. 
For our method, we replace this SwiGLU FFN with the corresponding MoA variant while keeping the remaining MAE architecture unchanged.
We pre-train all models for $800$ epochs with a batch size of $4096$. 
We use AdamW optimizer with $\beta=(0.9,0.999)$, $\epsilon=10^{-8}$, and weight decay $0.05$. 
Following standard practice, weight decay is not applied to bias parameters, normalization parameters, positional embeddings, the class token, or the mask token. 
The lr schedule uses linear warmup for $40$ epochs followed by cosine decay to zero. 
The peak lr of each model is tuned over $\{$\texttt{1.5e-4}, \texttt{3e-4}, \texttt{6e-4}, \texttt{1.2e-3}, \texttt{2.4e-3}$\}$.

\vspace{1.cm}

\section{Proofs}
\label{appendix: proof}

\subsection{Proofs in Section~\ref{section: theory: Type I}}
\label{appendix: proof: Type I}

\textbf{Additional Notations.}
Let \(\mathrm{ReLU}(t)=t_+\), \(\mathrm{ReLU}^2(t)=t_+^2\) and $\mathrm{LeakyReLU}(t)=\max\{t,\eta t\}$ for some fixed \(\eta\in(0,1)\). 
For a bounded function \(q\) on a set \(E\), define its oscillation by
$\operatorname{osc}_{E}(q):=\operatorname*{ess\,sup}_{x\in E}q(x)-\operatorname*{ess\,inf}_{x\in E}q(x)$.
If \(q\) is continuous on compact \(E\), this equal $\operatorname{osc}_{E}(q)=\sup_{x\in E}q(x)-\inf_{x\in E}q(x)$.
Additionally, we define the jump notation: 
let $S=\{(x_1,x_2)\in[-1,1]^2:x_2=0\}$.
For a piecewise \(C^1\) function \(f\), define the jump of its \(x_2\)-derivative across \(S\) by
$J_S(f)(x_1):=\partial_{x_2}f(x_1,0^+)-\partial_{x_2}f(x_1,0^-)$,
whenever the two one-sided traces exist. For finite sums of ridge functions with ReLU-type activations, these traces exist for almost every \(x_1\in[-1,1]\), because there are only finitely many singular hyperplanes.

% The crucial difference between \(\cF_{\rm LA}^{(m)}\) and \(\cF_{\rm MoA}^{(m)}\) is that \(\cF_{\rm LA}^{(m)}\) uses global scalar coefficients \(\alpha_c\), whereas \(\cF_{\rm MoA}^{(m)}\) uses input-dependent gating functions \(\tanh(\bu_c^\top \bar \bx)\). Thus \(\cF_{\rm MoA}^{(m)}\) can create adaptive ridge amplitudes.

\subsubsection{Proof of the Basic Inclusions}

\begin{proof}[Proof of \(\bigcup_{\sigma\in\cK}\cF_{\sigma}^{(m)}\subset \cF_{\rm LA}^{(m)}\)]
Let \(f\in \cF_{\sigma_j}^{(m)}\) for some \(\sigma_j\in\cK\). Then
\[
f(\bx)
=
\sum_{k=1}^{m} a_k \sigma_j(\bw_k^\top \bar \bx).
\]
In the definition of \(\cF_{\rm LA}^{(m)}\), choose
\[
\alpha_j=1,
\qquad
\alpha_c=0 \quad (c\neq j).
\]
Then
\[
f(\bx)
=
\sum_{k=1}^{m} a_k\sum_{c=1}^6 \alpha_c \sigma_c(\bw_k^\top \bar \bx),
\]
and hence \(f\in \cF_{\rm LA}^{(m)}\). Therefore,
\[
\bigcup_{\sigma\in\cK}\cF_{\sigma}^{(m)}
\subset
\cF_{\rm LA}^{(m)}.
\]
\end{proof}

\begin{proof}[Proof of \(\cF_{\rm LA}^{(m)}\subset \cF_{\rm MoA}^{(m)}\)]
Let
\[
g(\bx)
=
\sum_{k=1}^{m} a_k\sum_{c=1}^6
\alpha_c\sigma_c(\bw_k^\top \bar \bx)
\in
\cF_{\rm LA}^{(m)}.
\]
If \(\alpha_c=0\) for all \(c\), then \(g\equiv 0\in \cF_{\rm MoA}^{(m)}\). Otherwise, choose \(\rho>0\) sufficiently small such that
\[
|\rho\alpha_c|<1,
\qquad c=1,\ldots,6.
\]
For each \(c\), define
\[
\bu_c
=
(0,\ldots,0,\operatorname{arctanh}(\rho\alpha_c))
\in\mathbb R^{d+1}.
\]
Since \(\bar \bx=(\bx,1)\), we have
\[
\tanh(\bu_c^\top \bar \bx)
=
\tanh(\operatorname{arctanh}(\rho\alpha_c))
=
\rho\alpha_c.
\]
Therefore,
\[
g(\bx)
=
\sum_{k=1}^{m} \frac{a_k}{\rho}
\sum_{c=1}^6
\tanh(\bu_c^\top \bar \bx)
\sigma_c(\bw_k^\top \bar \bx).
\]
Hence \(g\in \cF_{\rm MoA}^{(m)}\), and consequently
\[
\cF_{\rm LA}^{(m)}
\subset
\cF_{\rm MoA}^{(m)}.
\]
\end{proof}

\subsubsection{Separation between \(\cF_{\rm LA}^{(m)}\) and Fixed-Activation Classes}

We first show that \(\cF_{\rm LA}^{(m)}\) strictly contains the union of fixed-activation classes. Consider the one-dimensional domain \([-1,1]\) and define
\[
T_{\rm LA}(t)
=
t_+ + t_+^2.
\]
Clearly,
\[
T_{\rm LA}\in \cF_{\rm LA}^{(1)}.
\]
Indeed, choose \(a_1=1\), \(\bw_1=(1,0)\), and
\[
\alpha_{\mathrm{ReLU}}=1,
\qquad
\alpha_{\mathrm{ReLU}^2}=1,
\]
while setting all other \(\alpha_c\)'s to zero. Since
\[
\cF_{\rm LA}^{(1)}
\subset
\cF_{\rm MoA}^{(1)},
\]
we also have
\[
T_{\rm LA}\in \cF_{\rm MoA}^{(1)}.
\]

We now prove that \(T_{\rm LA}\) cannot be represented by any width-\(m\) fixed-activation network.

\begin{lemma}[Step-function approximation lower bound]
\label{lem:step-lower-bound}
Let \(s\) be any step function on \([0,1]\) with at most \(m\) discontinuities. Then
\[
\|2t-s(t)\|_{L^\infty([0,1])}
\ge
\frac{1}{m+1}.
\]
Consequently,
\[
\inf_{s}
\|2t-s(t)\|_{L^\infty([0,1])}
\ge
\frac{1}{m+1},
\]
where the infimum is over all such step functions.
\end{lemma}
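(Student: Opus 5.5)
A step function with at most \(m\) discontinuities splits \([0,1]\) into at most \(m+1\) intervals on each of which it is constant. On any interval of length at least \(2/(m+1)\), the line \(2t\) varies by at least \(4/(m+1)\), which a single constant cannot track to within \(1/(m+1)\). The plan is a pigeonhole argument on interval lengths combined with an oscillation bound.

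First, let \(0=t_0<t_1<\dots<t_r=1\) with \(r\le m+1\), chosen so that \(s\equiv c_j\) on each open interval \(I_j=(t_{j-1},t_j)\). We may discard discontinuities lying outside \((0,1)\); the values of \(s\) at the finitely many points \(t_j\) do not affect the \(L^\infty\) norm, which is an essential supremum. Since \(\sum_j |I_j| = 1\) and there are \(r\le m+1\) intervals, some interval \(I_{j^*}\) has length \(\ell\ge 1/(m+1)\).

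Next, on \(I_{j^*}\) the function \(2t-c_{j^*}\) is continuous and affine with slope \(2\). Its oscillation over \(I_{j^*}\) is therefore \(2\ell\ge 2/(m+1)\). A function whose oscillation on a set is \(\omega\) has essential supremum of absolute value at least \(\omega/2\) on that set: if \(|g|\le M\) a.e., then \(\operatorname{osc}(g)\le 2M\). Hence
\[
\|2t-s(t)\|_{L^\infty([0,1])}\ \ge\ \|2t-c_{j^*}\|_{L^\infty(I_{j^*})}\ \ge\ \ell\ \ge\ \frac{1}{m+1}.
\]
The bound is uniform over all such \(s\), so the infimum statement follows immediately.

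The only delicate point is bookkeeping: making "step function with at most \(m\) discontinuities" precise so that there are at most \(m+1\) constancy intervals, and noting that the choice of values at jump points is irrelevant under the essential supremum. There is no real analytic obstacle. The bound is essentially sharp: equally spaced breakpoints with midpoint values give an error of exactly \(1/(m+1)\).
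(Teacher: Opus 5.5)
Your proposal is correct and is essentially the paper's own argument: pigeonhole gives a constancy interval of length at least $1/(m+1)$, and on it $2t$ has oscillation at least $2/(m+1)$, so no constant approximates it to within less than half of that. The heuristic in your opening paragraph (length $2/(m+1)$, variation $4/(m+1)$) overstates what pigeonhole gives, but your formal steps correctly use length $1/(m+1)$.
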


\begin{proof}
A step function with at most \(m\) discontinuities partitions \([0,1]\) into at most \(m+1\) intervals on which it is constant. Hence at least one interval \(I\subset[0,1]\) has length
\[
|I|\ge \frac{1}{m+1}.
\]
On this interval, the oscillation of \(2t\) is at least
\[
2|I|
\ge
\frac{2}{m+1}.
\]
The best uniform approximation of a function by a constant on an interval has error at least one half of its oscillation. Therefore,
\[
\|2t-s(t)\|_{L^\infty(I)}
\ge
\frac{1}{m+1}.
\]
This proves the claim.
\end{proof}

\begin{proposition}[Quantitative separation of \(\cF_{\rm LA}^{(m)}\) from fixed activations]
\label{prop:LA-separation}
For every \(m\ge 1\),
\[
\inf_{f\in\bigcup_{\sigma\in\cK}\cF_{\sigma}^{(m)}}
\|T_{\rm LA}-f\|_{\cW^{1,\infty}([-1,1])}
\ge
\frac{1}{m+1}.
\]
Consequently,
\[
T_{\rm LA}\notin
\bigcup_{\sigma\in\cK}\cF_{\sigma}^{(m)}.
\]
\end{proposition}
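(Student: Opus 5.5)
The proof bounds only the gradient part of the $\cW^{1,\infty}$-norm, since $\|T_{\rm LA}-f\|_{\cW^{1,\infty}}\ge\|T_{\rm LA}'-f'\|_{L^\infty([-1,1])}$. On the one-dimensional domain $[-1,1]$ each $f\in\cF_\sigma^{(m)}$ has the form $f(t)=\sum_{k=1}^m a_k\sigma(w_k t+b_k)$. It is Lipschitz, and its weak derivative agrees a.e.\ with the classical derivative $f'(t)=\sum_k a_kw_k\sigma'(w_kt+b_k)$ wherever the latter exists. The target has $T_{\rm LA}'(t)=\1_{\{t>0\}}(1+2t)$. This derivative has two features a single activation cannot produce together: a jump of size $1$ at $t=0$, and nonconstant (linear) growth on $(0,1]$. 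I split $\cK$ into two families and show each fixed activation misses one of these features.

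\textbf{Piecewise-linear activations} ($\ReLU$, $\LeakyReLU$). Here $\sigma'$ is a step function with one breakpoint. Each neuron $a_k\sigma(w_kt+b_k)$ therefore contributes a derivative that is piecewise constant with at most one discontinuity, at $t=-b_k/w_k$; if $w_k=0$ it contributes a constant. Summing, $f'$ restricted to $[0,1]$ is a step function $s$ with at most $m$ discontinuities. On $(0,1]$,
\[
T_{\rm LA}'-f' = 1+2t-s(t) = 2t-\big(s(t)-1\big),
\]
and $s-1$ is again a step function with at most $m$ discontinuities. Lemma~\ref{lem:step-lower-bound} then gives $\|T_{\rm LA}'-f'\|_{L^\infty([0,1])}\ge \frac1{m+1}$. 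The essential supremum and the pointwise supremum agree here up to the finitely many breakpoints, which form a null set.

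\textbf{$C^1$ activations} ($\ReLU^2$, $\GELU$, $\SiLU$, $\tanh$). Each of these is continuously differentiable on $\bbR$; for $\ReLU^2$ the derivative is $2t_+$, which is continuous. Hence $f'$ is continuous on $[-1,1]$. Set $c=f'(0)$. For any $\varepsilon>0$ there is $\delta>0$ such that $|f'(t)-c|<\varepsilon$ for all $|t|<\delta$. On $(-\delta,0)$ we have $|T_{\rm LA}'-f'|>|c|-\varepsilon$, and on $(0,\delta)$ we have $|T_{\rm LA}'-f'|>|1-c|-\varepsilon$ for $\delta$ small. Both intervals have positive measure, so the essential supremum is at least $\max\{|c|,|1-c|\}-\varepsilon\ge \frac12-\varepsilon$. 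Letting $\varepsilon\to0$ gives a bound of $\frac12\ge\frac1{m+1}$, valid for all $m\ge1$.

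Combining the two cases, every $f$ in the union satisfies $\|T_{\rm LA}-f\|_{\cW^{1,\infty}}\ge\frac1{m+1}$, and taking the infimum yields the proposition. Non-membership follows because a positive distance rules out equality. The one step needing care is the reduction in the piecewise-linear case: I must check that the weak derivative really is this step function, and that degenerate neurons ($w_k=0$, or breakpoints outside $[0,1]$) only reduce the number of discontinuities. Both checks are routine, so I expect no genuine obstacle.
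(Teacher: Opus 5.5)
Your proposal is correct and follows the paper's proof essentially step for step. You use the same split of $\cK$ into the piecewise-linear activations, handled by applying Lemma~\ref{lem:step-lower-bound} to $f'$ on $[0,1]$, and the $C^1$ activations, handled by the continuity of $f'$ at $0$ against the unit jump of $T_{\rm LA}'$, which gives $\tfrac12\ge\tfrac1{m+1}$. Your explicit rewriting $1+2t-s(t)=2t-(s(t)-1)$ and your $\varepsilon$--$\delta$ treatment of the essential supremum only make precise two steps the paper states more briefly.
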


\begin{proof}
We split the proof according to the activation.

First consider
\[
\sigma\in\{\mathrm{ReLU},\mathrm{LeakyReLU}\}.
\]
Then every \(f\in \cF_{\sigma}^{(m)}\) is a piecewise linear function on \([-1,1]\), and hence its weak derivative \(f'\) is a step function with at most \(m\) discontinuities. On \((0,1)\),
\[
T_{\rm LA}'(t)=1+2t.
\]
Therefore, by Lemma~\ref{lem:step-lower-bound},
\[
\|T_{\rm LA}'-f'\|_{L^\infty([0,1])}
=
\|1+2t-f'(t)\|_{L^\infty([0,1])}
\ge
\frac{1}{m+1}.
\]
Thus,
\[
\|T_{\rm LA}-f\|_{\cW^{1,\infty}([-1,1])}
\ge
\frac{1}{m+1}.
\]

Next consider
\[
\sigma\in
\{\mathrm{ReLU}^2,\mathrm{GELU},\mathrm{SiLU},\tanh\}.
\]
For these activations, every \(f\in \cF_{\sigma}^{(m)}\) is \(C^1\). Hence \(f'\) is continuous at \(t=0\). However,
\[
T_{\rm LA}'(t)
=
\begin{cases}
0, & t<0,\\
1+2t, & t>0.
\end{cases}
\]
Thus
\[
T_{\rm LA}'(0^-)=0,
\qquad
T_{\rm LA}'(0^+)=1.
\]
If
\[
\|T_{\rm LA}'-f'\|_{L^\infty([-1,1])}\le \varepsilon,
\]
then by taking one-sided limits at \(t=0\) and using the continuity of \(f'\), we obtain
\[
|f'(0)|\le \varepsilon,
\qquad
|1-f'(0)|\le \varepsilon.
\]
Therefore,
\[
\varepsilon\ge \frac12.
\]
Since \(m\ge1\), we have
\[
\frac12\ge \frac{1}{m+1}.
\]
Combining the two cases gives
\[
\|T_{\rm LA}-f\|_{\cW^{1,\infty}([-1,1])}
\ge
\frac{1}{m+1}
\]
for every
\[
f\in \bigcup_{\sigma\in\cK}\cF_{\sigma}^{(m)}.
\]
Taking the infimum proves the proposition.
\end{proof}

This proves the strict inclusion
\[
\bigcup_{\sigma\in\cK}\cF_{\sigma}^{(m)}
\subsetneq
\cF_{\rm LA}^{(m)}.
\]

\subsubsection{Separation between \(\cF_{\rm MoA}^{(m)}\) and \(\cF_{\rm LA}^{(m)}\)}

We now show that \(\cF_{\rm MoA}^{(m)}\) strictly contains \(\cF_{\rm LA}^{(m)}\). The key point is that \(\cF_{\rm MoA}^{(m)}\) can create ridge singularities whose amplitudes vary along the singular hyperplane, whereas \(\cF_{\rm LA}^{(m)}\) can only attach constant amplitudes to each ridge singularity.

Let \(\Omega=[-1,1]^2\). Fix \(\lambda>0\), and define
\[
T_{\rm MoA}(x_1,x_2)
=
\tanh(\lambda x_1)(x_2)_+.
\]

\begin{proposition}[Exact representation by \(\cF_{\rm MoA}^{(1)}\)]
\label{prop:MoA-exact}
The target \(T_{\rm MoA}\) belongs to \(\cF_{\rm MoA}^{(1)}\).
\end{proposition}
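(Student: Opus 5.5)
We exhibit explicit parameters realizing $T_{\rm MoA}$ with a single neuron ($m=1$). Work on $\Omega=[-1,1]^2$ with augmented input $\bar\bx=(x_1,x_2,1)$. For $d>2$ the extra coordinates simply receive zero weights.

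\textbf{Choice of parameters.} Index the dictionary so that $\sigma_1=\ReLU$. Then:
\begin{itemize}
\item set $a_1=1$;
\item set $\bw_1=(0,1,0)$, so that $\sigma_c(\bw_1^\top\bar\bx)=\sigma_c(x_2)$ for every $c$;
\item for the ReLU branch, set $\bu_1=(\lambda,0,0)$, giving $\tanh(\bu_1^\top\bar\bx)=\tanh(\lambda x_1)$;
\item for every other branch $c=2,\ldots,6$, set $\bu_c=\mathbf{0}$.
\end{itemize}

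\textbf{Verification.} The zero gate vectors give $\tanh(\bu_c^\top\bar\bx)=\tanh(0)=0$ identically, so those branches vanish. The neuron therefore evaluates to
\[
\sum_{c=1}^6\tanh(\bu_c^\top\bar\bx)\,\sigma_c(\bw_1^\top\bar\bx)
=\tanh(\lambda x_1)\,\ReLU(x_2)
=\tanh(\lambda x_1)(x_2)_+ = T_{\rm MoA}(\bx).
\]
This holds for all $\bx$, so $T_{\rm MoA}\in\cF_{\rm MoA}^{(1)}$.

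\textbf{Main point.} The construction is an identity check and has no real obstacle. The one feature that matters is that $\tanh(0)=0$. Because of it, zero gate vectors switch off the unwanted activations exactly, with no leftover constant terms. Had the gate been $\sigmoid$ instead, we would have needed a limiting or cancellation argument, for example pushing the bias coordinate toward $-\infty$.
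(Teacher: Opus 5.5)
Your construction is correct and matches the paper's proof exactly: a single neuron with $a_1=1$, $\bw_1=(0,1,0)$, the ReLU gate set to $\bu_{\mathrm{ReLU}}=(\lambda,0,0)$, and every other gate set to $\mathbf{0}$, so that $\tanh(0)=0$ switches off the remaining branches. Your remark about giving the extra coordinates zero weights when $d>2$ is a harmless and slightly more careful addition to the paper's two-dimensional presentation.
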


\begin{proof}
Use the ReLU branch. Choose
\[
a_1=1,
\qquad
\bw_1=(0,1,0)\in\mathbb R^3,
\qquad
\bu_{\mathrm{ReLU}}=(\lambda,0,0)\in\mathbb R^3.
\]
Then
\[
\bw_1^\top \bar \bx=x_2,
\qquad
\bu_{\mathrm{ReLU}}^\top \bar \bx=\lambda x_1.
\]
For all other activation branches, set \(\bu_c=\mathbf 0\), so that
\[
\tanh(\bu_c^\top \bar \bx)=0.
\]
Therefore,
\[
h(\bx)
=
\tanh(\lambda x_1)(x_2)_+
=
T_{\rm MoA}(x_1,x_2).
\]
Hence
\[
T_{\rm MoA}\in \cF_{\rm MoA}^{(1)}.
\]
\end{proof}

The weak derivative of \(T_{\rm MoA}\) with respect to \(x_2\) is
\[
\partial_{x_2}T_{\rm MoA}(x_1,x_2)
=
\tanh(\lambda x_1)\mathbf 1_{\{x_2>0\}}.
\]
Hence the jump of the normal derivative across \(S\) is
\[
J_S(T_{\rm MoA})(x_1)
=
\tanh(\lambda x_1).
\]

For functions in \(\cF_{\rm LA}^{(m)}\), the corresponding jump amplitude across a fixed hyperplane is necessarily constant.

\begin{lemma}[Constant jump amplitude for \(\cF_{\rm LA}^{(m)}\)]
\label{lem:constant-jump-LA}
Let \(f\in \cF_{\rm LA}^{(m)}\). Then the jump of \(\partial_{x_2}f\) across
\[
S=\{x_2=0\}
\]
is almost everywhere a constant function of \(x_1\). That is, there exists \(C_f\in\mathbb R\) such that
\[
J_S(f)(x_1)=C_f
\]
for almost every \(x_1\in[-1,1]\).
\end{lemma}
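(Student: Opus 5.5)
We work in the setting $\Omega=[-1,1]^2$ (for $d>2$ we restrict to the plane spanned by $x_1,x_2$, which does not affect the argument). By linearity of $J_S$, it is enough to handle one neuron and then add up. A general element of $\cF_{\rm LA}^{(m)}$ has the form
\[
f(\bx)=\sum_{k=1}^m a_k\,\psi(\bw_k^\top\bar\bx),\qquad \psi:=\sum_{c=1}^6\alpha_c\sigma_c .
\]
The key observation is that every neuron uses the same scalar profile $\psi$. This $\psi$ is $C^1$ on $\mathbb R\setminus\{0\}$, since $\ReLU^2$, $\GELU$, $\SiLU$ and $\tanh$ are $C^1$ everywhere and $\ReLU$, $\LeakyReLU$ are linear on each side of $0$. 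At the origin, $\psi'$ has a single finite jump
\[
\psi'(0^+)-\psi'(0^-)=\alpha_{\ReLU}+(1-\eta)\alpha_{\LeakyReLU}=:\delta .
\]
So the plan is to compute $J_S$ of a single ridge function $g(\bx)=\psi(\bw^\top\bar\bx)$ with $\bw=(w_1,w_2,b)$, and show that it is a constant for a.e. $x_1$.

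For a single ridge we split into cases according to how the singular line $L=\{w_1x_1+w_2x_2+b=0\}$ sits relative to $S$.
\begin{itemize}
\item[(i)] $L$ is not the line $S$. Then $L\cap S$ is at most one point, or $L$ misses $S$ entirely. For every other $x_1$, the function $g$ is $C^1$ in a neighbourhood of $(x_1,0)$, so the one-sided traces of $\partial_{x_2}g$ coincide and $J_S(g)(x_1)=0$. This also covers $w_2=0$: then $\partial_{x_2}g\equiv0$ wherever it is defined. It also covers $w_1=w_2=0$, where $g$ is constant.
\item[(ii)] $L=S$, i.e.\ $w_1=0$, $b=0$, $w_2\neq0$. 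Then $g=\psi(w_2x_2)$ and $\partial_{x_2}g=w_2\psi'(w_2x_2)$. For $w_2>0$ the jump is $w_2\bigl(\psi'(0^+)-\psi'(0^-)\bigr)=w_2\delta$. For $w_2<0$ the sides swap, giving $w_2\bigl(\psi'(0^-)-\psi'(0^+)\bigr)=-w_2\delta$. In both cases the jump equals $|w_2|\delta$, independent of $x_1$.
\end{itemize}

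Summing over neurons gives
\[
J_S(f)(x_1)=\delta\sum_{k\in\mathcal I}a_k|w_{k,2}|=:C_f\qquad\text{for a.e. } x_1,
\]
where $\mathcal I$ is the set of neurons whose singular line equals $S$. The exceptional set is the finite union of the intersection points from case (i), which has measure zero.

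The main obstacle is making the trace computation rigorous in case (i): one has to check that the one-sided limits $\partial_{x_2}f(x_1,0^\pm)$ exist and agree away from finitely many points. The approach is to fix $x_1$ outside the finite exceptional set. Then there is a small disk around $(x_1,0)$ that meets no singular line except possibly $S$ itself. On each half-disk, every neuron not of type (ii) is $C^1$ up to the boundary, so its traces exist and agree. The neurons of type (ii) depend only on $x_2$, and their one-sided limits were computed explicitly above.

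Given the lemma, the proof of the MoA separation follows. For $f\in\cF_{\rm LA}^{(m)}$, the jump $J_S(f)$ is the constant $C_f$, while $J_S(T_{\rm MoA})(x_1)=\tanh(\lambda x_1)$, and the latter has oscillation $2\tanh\lambda$ on $[-1,1]$. The jump of $\partial_{x_2}$ is bounded by $2\|\partial_{x_2}(T_{\rm MoA}-f)\|_{L^\infty}$. Since the best constant approximation of $\tanh(\lambda x_1)$ has uniform error $\tanh\lambda$, this gives $\|\partial_{x_2}(T_{\rm MoA}-f)\|_{L^\infty}\ge\tfrac12\tanh\lambda$, which is the stated bound.
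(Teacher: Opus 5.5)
Your proposal is correct and follows essentially the same route as the paper. Both arguments split $f$ into ridge neurons and note that only the $\ReLU$/$\LeakyReLU$ components can make the first derivative jump; they then discard neurons whose singular line meets $S$ in at most one point, and observe that a neuron whose singular line is exactly $S$ depends only on $x_2$, so its jump does not depend on $x_1$. Your version is more explicit than the paper's: you compute $C_f=\delta\sum_{k\in\mathcal I}a_k|w_{k,2}|$, and you justify existence of the one-sided traces with the small-disk argument, which the paper only asserts (the shared profile $\psi$ just simplifies this formula; constancy would hold even with per-neuron coefficients).
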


\begin{proof}
Write
\[
f(\bx)
=
\sum_{k=1}^{m} a_k\sum_{c=1}^6
\alpha_c\sigma_c(\bw_k^\top \bar \bx).
\]
Among the activations in \(\cK\), only \(\mathrm{ReLU}\) and \(\mathrm{LeakyReLU}\) can generate jumps in first derivatives. The activations
\[
\mathrm{ReLU}^2,\qquad
\mathrm{GELU},\qquad
\mathrm{SiLU},\qquad
\tanh
\]
have continuous first derivatives and therefore do not contribute to \(J_S(f)\).

Consider a ReLU-type ridge term
\[
\sigma(\bw_k^\top \bar \bx).
\]
Its first derivative can jump only across the affine hyperplane
\[
H_k=\{\bx:\bw_k^\top \bar \bx=0\}.
\]
If \(H_k\neq S\), then \(H_k\cap S\) has lower dimension inside \(S\), and hence this ridge term contributes no jump across \(S\) for almost every \(x_1\in[-1,1]\).

If \(H_k=S\), then \(\bw_k\) is proportional to the normal vector of \(S\). Hence the jump of
\[
\partial_{x_2}\sigma(\bw_k^\top \bar \bx)
\]
across \(S\) is a constant independent of \(x_1\). Summing over finitely many neurons gives
\[
J_S(f)(x_1)=C_f
\]
for almost every \(x_1\in[-1,1]\).
\end{proof}

\begin{proposition}[Quantitative separation of \(\cF_{\rm MoA}^{(m)}\) from \(\cF_{\rm LA}^{(m)}\)]
\label{prop:MoA-separation}
For every \(m\ge1\),
\[
\inf_{f\in \cF_{\rm LA}^{(m)}}
\|T_{\rm MoA}-f\|_{\cW^{1,\infty}(\Omega)}
\ge
\frac12\tanh(\lambda).
\]
Consequently,
\[
T_{\rm MoA}\notin \cF_{\rm LA}^{(m)}.
\]
\end{proposition}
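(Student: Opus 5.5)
The plan is to convert the $\cW^{1,\infty}$ bound into a bound on the jump of $\partial_{x_2}$ across $S$, and then use Lemma~\ref{lem:constant-jump-LA}. Fix $f\in\cF_{\rm LA}^{(m)}$ and set $\varepsilon:=\|T_{\rm MoA}-f\|_{\cW^{1,\infty}(\Omega)}$. It suffices to show $\varepsilon\ge\frac12\tanh(\lambda)$. Taking the infimum over $f$ then gives the quantitative claim. Non-membership follows because $T_{\rm MoA}\in\cF_{\rm LA}^{(m)}$ would force the infimum to be $0<\frac12\tanh(\lambda)$.

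\textbf{Step 1 (reduction to the plane).} If $d>2$, I restrict to the slice $x_3=\cdots=x_d=0$. A ridge function $\sigma(\bw^\top\bar\bx)$ restricted to this slice is again a ridge function in $(x_1,x_2)$ with augmented input $(x_1,x_2,1)$. Hence $f$ restricted to the slice lies in the planar class $\cF_{\rm LA}^{(m)}$ on $[-1,1]^2$. The $\cW^{1,\infty}$ norm on the slice is controlled by the full norm, since $f$ is piecewise $C^1$ with finitely many singular hyperplanes, so the essential bounds pass to a.e.\ slice by continuity. So I may take $\Omega=[-1,1]^2$.

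\textbf{Step 2 (traces of the derivative error).} Let $g:=\partial_{x_2}(T_{\rm MoA}-f)$, so that $|g|\le\varepsilon$ a.e.\ on $\Omega$. Only finitely many singular lines $H_k$ occur. Hence for a.e.\ $x_1\in(-1,1)$ there is a small rectangle $(x_1-\delta,x_1+\delta)\times(-\delta,\delta)$ meeting no $H_k$ other than possibly $S$ itself. On each of its two halves $x_2>0$ and $x_2<0$, $g$ is continuous, and the one-sided limits $g(x_1,0^\pm)$ exist. A continuous function bounded by $\varepsilon$ almost everywhere is bounded by $\varepsilon$ everywhere on an open set, so $|g(x_1,0^\pm)|\le\varepsilon$. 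Consequently
\[
|J_S(T_{\rm MoA})(x_1)-J_S(f)(x_1)|=|g(x_1,0^+)-g(x_1,0^-)|\le 2\varepsilon
\]
for a.e.\ $x_1$.

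\textbf{Step 3 (oscillation).} By the computation preceding the lemma, $J_S(T_{\rm MoA})(x_1)=\tanh(\lambda x_1)$. By Lemma~\ref{lem:constant-jump-LA}, $J_S(f)(x_1)=C_f$ for a.e.\ $x_1$. Hence $\sup_{x_1}|\tanh(\lambda x_1)-C_f|\le 2\varepsilon$, where the supremum is essential but equals the true supremum by continuity. The oscillation of $\tanh(\lambda x_1)$ on $[-1,1]$ is $2\tanh(\lambda)$. As in the proof of Lemma~\ref{lem:step-lower-bound}, the best uniform approximation by a constant has error at least half the oscillation, which is $\tanh(\lambda)$. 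Therefore $2\varepsilon\ge\tanh(\lambda)$.

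The main obstacle is Step 2: justifying that an $L^\infty$ bound on the weak derivative controls pointwise one-sided traces. Piecewise smoothness off finitely many lines handles this, as indicated. A secondary point is making the restriction to a slice in Step 1 rigorous. Alternatively, one could rerun Lemma~\ref{lem:constant-jump-LA} directly with $S=\{x_2=0\}$ as a hyperplane in $[-1,1]^d$, where the same argument shows the jump is constant a.e.\ on $S$.
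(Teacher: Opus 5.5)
Your proposal is correct and follows essentially the same route as the paper. Like the paper, you bound the jump of $\partial_{x_2}(T_{\rm MoA}-f)$ across $S=\{x_2=0\}$ by $2\varepsilon$ using one-sided traces on the two half-domains, invoke Lemma~\ref{lem:constant-jump-LA} to make $J_S(f)$ constant, and note that no constant approximates $\tanh(\lambda x_1)$ on $[-1,1]$ within less than $\tanh(\lambda)$.

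Your Step~1 reduction from $[-1,1]^d$ to the plane, which the appendix skips by working directly on $[-1,1]^2$, and your continuity justification of the traces add rigor rather than change the argument. In Step~1 the cleanest phrasing is that $T_{\rm MoA}$ ignores $x_3,\dots,x_d$ and the constants are absorbed into the biases, so any generic slice $(x_3,\dots,x_d)=c$ works; your ``a.e.\ slice'' wording then suffices without singling out the zero slice.
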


\begin{proof}
Let \(f\in \cF_{\rm LA}^{(m)}\), and suppose
\[
\|T_{\rm MoA}-f\|_{\cW^{1,\infty}(\Omega)}
\le \varepsilon.
\]
Then, in particular,
\[
\|\partial_{x_2}T_{\rm MoA}-\partial_{x_2}f\|_{L^\infty(\Omega)}
\le \varepsilon.
\]
Taking one-sided traces from the two sides of \(S\), we obtain
\[
\|J_S(T_{\rm MoA})-J_S(f)\|_{L^\infty([-1,1])}
\le
2\varepsilon.
\]
This follows from applying the derivative bound on the two open half-domains
\[
\Omega^+=\Omega\cap\{x_2>0\},
\qquad
\Omega^-=\Omega\cap\{x_2<0\},
\]
and then taking one-sided limits at \(S\), which exist for almost every \(x_1\).

By Lemma~\ref{lem:constant-jump-LA}, there exists \(C_f\in\mathbb R\) such that
\[
J_S(f)(x_1)=C_f
\]
for almost every \(x_1\in[-1,1]\). Therefore,
\[
2\varepsilon
\ge
\inf_{C\in\mathbb R}
\|\tanh(\lambda x_1)-C\|_{L^\infty([-1,1])}.
\]
Since \(\tanh(\lambda x_1)\) ranges over
\[
[-\tanh(\lambda),\tanh(\lambda)]
\]
on \([-1,1]\), the best uniform approximation by a constant has error
\[
\inf_{C\in\mathbb R}
\|\tanh(\lambda x_1)-C\|_{L^\infty([-1,1])}
=
\tanh(\lambda).
\]
Thus,
\[
2\varepsilon\ge \tanh(\lambda),
\]
and hence
\[
\varepsilon\ge \frac12\tanh(\lambda).
\]
Taking the infimum over \(f\in \cF_{\rm LA}^{(m)}\) proves the result.
\end{proof}

Since
\[
\bigcup_{\sigma\in\cK}\cF_{\sigma}^{(m)}
\subset
\cF_{\rm LA}^{(m)},
\]
Proposition~\ref{prop:MoA-separation} also implies
\[
T_{\rm MoA}\notin
\bigcup_{\sigma\in\cK}\cF_{\sigma}^{(m)}.
\]
Therefore,
\[
\cF_{\rm LA}^{(m)}
\subsetneq
\cF_{\rm MoA}^{(m)}.
\]

\subsubsection{A General Adaptive Target Class}

The previous example is a special case of a broader family of adaptive ridge functions. Let
\[
T_{\bu,\beta,\bw,b}(\bx)
=
\tanh(\bu^\top \bx+\beta)(\bw^\top \bx+b)_+.
\]
Then
\[
T_{\bu,\beta,\bw,b}\in \cF_{\rm MoA}^{(1)}.
\]
Indeed, one can choose the ReLU ridge
\[
\sigma(\bw^\top \bx+b)=(\bw^\top \bx+b)_+
\]
and the adaptive gate
\[
\tanh(\bu^\top \bx+\beta).
\]

Let
\[
S=\{\bx:\bw^\top \bx+b=0\}.
\]
If the function
\[
\bx\mapsto \tanh(\bu^\top \bx+\beta)
\]
is nonconstant on \(S\cap\Omega\), then \(T_{\bu,\beta,\bw,b}\) cannot be represented by \(\cF_{\rm LA}^{(m)}\) at fixed width. More quantitatively, the same jump-amplitude argument yields
\[
\inf_{f\in \cF_{\rm LA}^{(m)}}
\|T_{\bu,\beta,\bw,b}-f\|_{\cW^{1,\infty}(\Omega)}
\ge
\frac14
\operatorname{osc}_{S\cap\Omega}
\left(
\tanh(\bu^\top \bx+\beta)
\right).
\]
Indeed, for any \(f\in\cF_{\rm LA}^{(m)}\), the jump of \(f\) across \(S\) has constant amplitude almost everywhere on \(S\), whereas
\[
J_S(T_{\bu,\beta,\bw,b})(\bx)
=
\|\bw\|_2 \tanh(\bu^\top \bx+\beta)
\]
up to a fixed sign depending on the chosen normal orientation. Therefore, approximating this nonconstant jump profile by a constant incurs at least one half of its oscillation, and the trace argument loses another factor of \(2\). This gives the factor \(1/4\).

This lower bound captures the essential adaptivity gap: \(\cF_{\rm MoA}^{(m)}\) can make the amplitude of a ridge singularity vary along the ridge hyperplane, while \(\cF_{\rm LA}^{(m)}\) can only assign constant amplitudes to such singularities.

\subsubsection{Proof of Theorem~\ref{thm: expressive hierarchy, Type I}}

The inclusions
\[
\bigcup_{\sigma\in\cK}\cF_{\sigma}^{(m)}
\subset
\cF_{\rm LA}^{(m)}
\subset
\cF_{\rm MoA}^{(m)}
\]
were proved above.

The first inclusion is strict by Proposition~\ref{prop:LA-separation}, which constructs
\[
T_{\rm LA}(t)=t_+ + t_+^2
\]
satisfying
\[
T_{\rm LA}\in \cF_{\rm LA}^{(1)}
\subset
\cF_{\rm MoA}^{(1)}
\]
but
\[
\inf_{f\in\bigcup_{\sigma\in\cK}\cF_{\sigma}^{(m)}}
\|T_{\rm LA}-f\|_{\cW^{1,\infty}([-1,1])}
\ge
\frac{1}{m+1}.
\]
Hence
\[
T_{\rm LA}\notin
\bigcup_{\sigma\in\cK}\cF_{\sigma}^{(m)}.
\]

The second inclusion is strict by Proposition~\ref{prop:MoA-separation}, which constructs
\[
T_{\rm MoA}(x_1,x_2)
=
\tanh(\lambda x_1)(x_2)_+
\]
satisfying
\[
T_{\rm MoA}\in \cF_{\rm MoA}^{(1)}
\]
but
\[
\inf_{f\in \cF_{\rm LA}^{(m)}}
\|T_{\rm MoA}-f\|_{\cW^{1,\infty}([-1,1]^2)}
\ge
\frac12\tanh(\lambda).
\]
Therefore,
\[
T_{\rm MoA}\notin \cF_{\rm LA}^{(m)}.
\]
Since
\[
\bigcup_{\sigma\in\cK}\cF_{\sigma}^{(m)}
\subset
\cF_{\rm LA}^{(m)},
\]
we also have
\[
T_{\rm MoA}\notin
\bigcup_{\sigma\in\cK}\cF_{\sigma}^{(m)}.
\]

Combining these results proves
\[
\bigcup_{\sigma\in\cK}\cF_{\sigma}^{(m)}
\subsetneq
\cF_{\rm LA}^{(m)}
\subsetneq
\cF_{\rm MoA}^{(m)}.
\]
\qed

\subsection{Proofs in Section~\ref{section: theory: Type II}}
\label{appendix: proof: Type II}

% \textbf{Additional Notations.} We shall use the following jump notation. Let $S=\{(x_1,x_2)\in[-1,1]^2:x_2=0\}$.
% For a piecewise \(C^1\) function \(f\), define the jump of its \(x_2\)-derivative across \(S\) by
% $J_S(f)(x_1):=\partial_{x_2}f(x_1,0^+)-\partial_{x_2}f(x_1,0^-)$,
% whenever the one-sided traces exist. For finite sums of ridge functions with ReLU-type activations, these traces exist for almost every \(x_1\in[-1,1]\).

\subsubsection{Basic Inclusions}

\begin{proof}[Proof of 
\(\bigcup_{p,q\in[7]}\cG_{\sigma_{p,q}}^{(m)}
\subset
\cG_{\rm LA}^{(m)}\)]
Let
\[
f\in \cG_{\sigma_{p_0,q_0}}^{(m)}
\]
for some \((p_0,q_0)\in[7]^2\). Then
\[
f(\bx)
=
\sum_{k=1}^m
a_k
\sigma_{p_0}(\bw_k^\top \bar\bx)
\sigma_{q_0}(\bu_k^\top \bar\bx).
\]

If \(p_0\le q_0\), choose
\[
\alpha_{p_0,q_0}=1,
\qquad
\alpha_{p,q}=0
\quad
\text{for }(p,q)\neq(p_0,q_0).
\]
Then \(f\in\cG_{\rm LA}^{(m)}\).

If \(p_0>q_0\), use the commutativity of multiplication:
\[
\sigma_{p_0}(\bw_k^\top \bar\bx)
\sigma_{q_0}(\bu_k^\top \bar\bx)
=
\sigma_{q_0}(\bu_k^\top \bar\bx)
\sigma_{p_0}(\bw_k^\top \bar\bx).
\]
Thus the same function is represented in \(\cG_{\rm LA}^{(m)}\) by choosing
\[
\alpha_{q_0,p_0}=1,
\]
and swapping the two weight vectors in each product branch.

Therefore,
\[
\bigcup_{p,q\in[7]}\cG_{\sigma_{p,q}}^{(m)}
\subset
\cG_{\rm LA}^{(m)}.
\]
\end{proof}

\begin{proof}[Proof of 
\(\cG_{\rm LA}^{(m)}
\subset
\cG_{\rm MoA}^{(m)}\)]
Let
\[
g(\bx)
=
\sum_{k=1}^m
a_k
\sum_{1\le p\le q\le7}
\alpha_{p,q}
\sigma_p(\bw_k^\top \bar\bx)
\sigma_q(\bu_k^\top \bar\bx)
\in
\cG_{\rm LA}^{(m)}.
\]
If all \(\alpha_{p,q}\)'s are zero, then \(g\equiv0\in\cG_{\rm MoA}^{(m)}\). Otherwise, choose \(\rho>0\) sufficiently small such that
\[
|\rho\alpha_{p,q}|<1,
\qquad
1\le p\le q\le7.
\]
For each pair \(1\le p\le q\le7\), define
\[
\bv_{p,q}
=
(0,\ldots,0,\operatorname{arctanh}(\rho\alpha_{p,q}))
\in\mathbb R^{d+1}.
\]
Since \(\bar\bx=(\bx,1)\), we have
\[
\tanh(\bv_{p,q}^\top \bar\bx)
=
\rho\alpha_{p,q}.
\]
Therefore,
\[
g(\bx)
=
\sum_{k=1}^m
\frac{a_k}{\rho}
\sum_{1\le p\le q\le7}
\tanh(\bv_{p,q}^\top \bar\bx)
\sigma_p(\bw_k^\top \bar\bx)
\sigma_q(\bu_k^\top \bar\bx).
\]
Thus \(g\in\cG_{\rm MoA}^{(m)}\), and hence
\[
\cG_{\rm LA}^{(m)}
\subset
\cG_{\rm MoA}^{(m)}.
\]
\end{proof}

\subsubsection{One-Dimensional Ridge Classes for Jump Profiles}

For \(\sigma\in\cK\), define
\[
\cR_{\sigma}^{(M)}
:=
\left\{
t\mapsto
\sum_{\ell=1}^M b_\ell\sigma(r_\ell t+s_\ell)
:
b_\ell,r_\ell,s_\ell\in\mathbb R
\right\}.
\]
Define the finite dictionary ridge class
\[
\cA^{(M)}
:=
\left\{
t\mapsto
\sum_{\ell=1}^M
\sum_{c=1}^7
b_{\ell,c}\sigma_c(r_\ell t+s_\ell)
:
b_{\ell,c},r_\ell,s_\ell\in\mathbb R
\right\}.
\]
This class is larger than the one-dimensional LA class because it allows activation coefficients to depend on \(\ell\).

Let
\[
\cN_1:=\{\mathrm{ReLU},\mathrm{LeakyReLU}\}.
\]
These are exactly the activations in \(\cK\) whose first derivatives have jump discontinuities. The activations
\[
\mathrm{id},\quad
\mathrm{ReLU}^2,\quad
\mathrm{GELU},\quad
\mathrm{SiLU},\quad
\tanh
\]
are \(C^1\) and therefore do not generate jumps in first derivatives.

\subsubsection{Separation between \(\cG_{\rm LA}^{(m)}\) and Fixed Type-II Classes}

Define
\[
A(t):=t_+ + \tanh(t),
\]
and consider
\[
T_{\rm LA}(x_1,x_2)
:=
(x_2)_+A(x_1)
=
(x_2)_+\bigl((x_1)_+ + \tanh(x_1)\bigr).
\]

\begin{proposition}[Exact representation by \(\cG_{\rm LA}^{(1)}\)]
\label{prop:type-II-LA-exact}
We have
\[
T_{\rm LA}\in\cG_{\rm LA}^{(1)}.
\]
\end{proposition}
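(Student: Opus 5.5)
The plan is to exhibit explicit parameters for a single width-one neuron of $\cG_{\rm LA}^{(1)}$, in the same way $T_{\rm LA}$ was shown to lie in $\cF_{\rm LA}^{(1)}$ in the Type-I case. Recall the indexing $\sigma_1=\mathrm{id}$, $\sigma_2=\mathrm{ReLU}$, $\sigma_3=\mathrm{ReLU}^2$, $\sigma_4=\mathrm{LeakyReLU}$, $\sigma_5=\mathrm{GELU}$, $\sigma_6=\mathrm{SiLU}$, $\sigma_7=\tanh$. The target is
\[
T_{\rm LA}(\bx)=(x_1)_+(x_2)_+ + \tanh(x_1)(x_2)_+ ,
\]
which is a sum of two products, each pairing one activation applied to $x_1$ with $\mathrm{ReLU}$ applied to $x_2$. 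Both product terms are available in a single qd-LA neuron, provided we use the same two ridge directions $\bw_1,\bu_1$ for every activation pair.

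The main obstacle is the ordering constraint $p\le q$ in the qd-LA sum. The factor $\sigma_p$ is always evaluated at $\bw_1^\top\bar\bx$ and $\sigma_q$ at $\bu_1^\top\bar\bx$, so we cannot place $\tanh$ on one ridge and $\mathrm{ReLU}$ on the other in an arbitrary order. We therefore fix the assignment to respect $p\le q$ for both terms at once.

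\begin{itemize}
\item Put the $\mathrm{ReLU}(x_2)$ factor in the first slot. Take $\bw_1=(0,1,0,\dots,0)\in\R^{d+1}$, so $\bw_1^\top\bar\bx=x_2$, and use $p=2$.
\item Put the $x_1$-dependent factors in the second slot. Take $\bu_1=(1,0,\dots,0)$, so $\bu_1^\top\bar\bx=x_1$, and use $q=2$ for $\mathrm{ReLU}$ and $q=7$ for $\tanh$.
\end{itemize}

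Both index pairs $(2,2)$ and $(2,7)$ then satisfy $p\le q$. Set $a_1=1$, $\alpha_{2,2}=\alpha_{2,7}=1$, and all other $\alpha_{p,q}=0$. The neuron then equals
\[
\mathrm{ReLU}(x_2)\,\mathrm{ReLU}(x_1)+\mathrm{ReLU}(x_2)\tanh(x_1)=(x_2)_+\bigl((x_1)_++\tanh(x_1)\bigr)=T_{\rm LA}(\bx),
\]
as required. This holds on all of $\R^d$ for $d\ge2$, with coordinates beyond the second and the bias left at zero.

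Finally, membership in $\cG_{\rm MoA}^{(1)}$ follows immediately from the inclusion $\cG_{\rm LA}^{(m)}\subset\cG_{\rm MoA}^{(m)}$ proved above with $m=1$.
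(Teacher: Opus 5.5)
Your proposal is correct and matches the paper's proof: a single neuron with $a_1=1$, $\bw_1^\top\bar\bx=x_2$, $\bu_1^\top\bar\bx=x_1$ and $\alpha_{\mathrm{ReLU},\mathrm{ReLU}}=\alpha_{\mathrm{ReLU},\tanh}=1$, with $\mathrm{ReLU}(x_2)$ placed in the first slot so that both index pairs satisfy $p\le q$. The only differences are that you write the weight vectors in $\R^{d+1}$ for general $d\ge2$ rather than in $\R^3$, and you add the (valid) remark on membership in $\cG_{\rm MoA}^{(1)}$.
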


\begin{proof}
Choose one neuron with
\[
a_1=1,
\qquad
\bw_1=(0,1,0),
\qquad
\bu_1=(1,0,0).
\]
Then
\[
\bw_1^\top\bar\bx=x_2,
\qquad
\bu_1^\top\bar\bx=x_1.
\]
Since
\[
\mathrm{ReLU}
\preceq
\tanh
\]
in the ordering of \(\cK\), the pair
\((\mathrm{ReLU},\tanh)\) satisfies \(p\le q\). Choose
\[
\alpha_{\mathrm{ReLU},\mathrm{ReLU}}=1,
\qquad
\alpha_{\mathrm{ReLU},\tanh}=1,
\]
and set all other \(\alpha_{p,q}\)'s to zero. Then
\[
g_{\rm LA}(\bx)
=
(x_2)_+(x_1)_+
+
(x_2)_+\tanh(x_1)
=
T_{\rm LA}(x_1,x_2).
\]
Thus
\[
T_{\rm LA}\in\cG_{\rm LA}^{(1)}.
\]
\end{proof}

\begin{lemma}[Jump profiles of fixed Type-II classes]
\label{lem:fixed-type-II-jump-profile}
Let
\[
f\in\cG_{\sigma_{p,q}}^{(m)}.
\]
Then
\[
J_S(f)
\in
\mathbf 1_{\{\sigma_p\in\cN_1\}}\cR_{\sigma_q}^{(m)}
+
\mathbf 1_{\{\sigma_q\in\cN_1\}}\cR_{\sigma_p}^{(m)}.
\]
Here the corresponding term is omitted if the activation is not in \(\cN_1\).
\end{lemma}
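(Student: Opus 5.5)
We compute the jump term by term, using linearity of $J_S$ and the product rule. Write
\[
f(\bx)=\sum_{k=1}^m a_k\,\sigma_p(\bw_k^\top\bar\bx)\,\sigma_q(\bu_k^\top\bar\bx),
\]
so that $J_S(f)=\sum_k a_k J_S(h_k)$ with $h_k:=\sigma_p(\bw_k^\top\bar\bx)\sigma_q(\bu_k^\top\bar\bx)$. The jump is taken for a.e.\ $x_1$, and as before only finitely many singular hyperplanes occur. For each $k$ the product rule gives, on each side of $S$,
\[
\partial_{x_2}h_k=w_{k,2}\,\sigma_p'(\bw_k^\top\bar\bx)\,\sigma_q(\bu_k^\top\bar\bx)+u_{k,2}\,\sigma_p(\bw_k^\top\bar\bx)\,\sigma_q'(\bu_k^\top\bar\bx).
\]
Since $\sigma_p,\sigma_q$ are continuous, the first summand can jump across $S$ only through $\sigma_p'$, and the second only through $\sigma_q'$.

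For the first summand, there are two cases.
\begin{itemize}
\item If $\sigma_p\notin\cN_1$, then $\sigma_p'$ is continuous and the summand contributes nothing.
\item If $\sigma_p\in\cN_1$, then $\sigma_p'$ jumps only on the hyperplane $H_k=\{\bw_k^\top\bar\bx=0\}$. If $H_k\neq S$, then $H_k\cap S$ is a single point of $S$, or empty, so the contribution vanishes for a.e.\ $x_1$. If $H_k=S$, then $\bw_k=(0,w_{k,2},0)$ with $w_{k,2}\neq0$ (for $d>2$ the extra coordinates are zero as well, and we restrict to the $(x_1,x_2)$ slice). The jump of $\sigma_p'$ at $0$ is a constant: $1$ for ReLU and $1-\eta$ for LeakyReLU. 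Orienting by $\operatorname{sign}(w_{k,2})$, the contribution is $c_p\,|w_{k,2}|\cdot\operatorname{sign}\cdot\sigma_q(\bu_k^\top\bar\bx)|_{x_2=0}=b_k\,\sigma_q(u_{k,1}x_1+u_{k,3})$.
\end{itemize}
In the second case this is a one-dimensional ridge term in $x_1$, so summing over $k$ gives an element of $\cR_{\sigma_q}^{(m)}$, with at most $m$ terms and some coefficients possibly zero. The second summand is handled symmetrically and yields an element of $\cR_{\sigma_p}^{(m)}$ when $\sigma_q\in\cN_1$.

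The main care point is that the one-sided traces of the non-jumping factor coincide across $S$. This holds because $\sigma_q(\bu_k^\top\bar\bx)$ is continuous, while $\sigma_q'(\bu_k^\top\bar\bx)$ is locally bounded and continuous off its own singular hyperplane. So when the jump is produced by $\sigma_p'$, the factor $\sigma_q$ evaluated at $x_2=0^\pm$ agrees.

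We must also rule out simultaneous jumps. When both $H_k=S$ and $\{\bu_k^\top\bar\bx=0\}=S$, both summands jump at once, but each summand's jump factorizes as above and the two simply add. In that case $\sigma_q(\bu_k^\top\bar\bx)$ restricted to $S$ is the constant $\sigma_q(0)$, which is a degenerate ridge with $r_\ell=0$ and is allowed in $\cR$.

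The terms where both factors are singular at points of $S$ off the diagonal affect only a measure-zero set of $x_1$. Summing all contributions gives the claimed membership.
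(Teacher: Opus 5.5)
Your proposal is correct and follows essentially the paper's argument. Summand by summand, a jump of $\partial_{x_2}$ across $S$ can only come from an $\cN_1$ factor whose singular hyperplane coincides with $S$; the other factor restricted to $S$ then gives a one-dimensional ridge $\sigma_q(r_kx_1+s_k)$ (resp.\ $\sigma_p(\tilde r_kx_1+\tilde s_k)$), and summing over $k$ gives the claim. Your explicit product-rule computation fills in details the paper leaves implicit (continuity of the non-jumping factor across $S$, the simultaneous-jump case), and its small slips are harmless because the ridge coefficients are arbitrary reals: the jump of $\partial_{x_2}\sigma_p(w_{k,2}x_2)$ is $c_p|w_{k,2}|$ with no extra sign, and on the $(x_1,x_2)$ slice the remaining coordinates of $\bw_k$ need not vanish, they are simply irrelevant.
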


\begin{proof}
Consider one summand
\[
a_k
\sigma_p(\bw_k^\top\bar\bx)
\sigma_q(\bu_k^\top\bar\bx).
\]
A jump in \(\partial_{x_2}\) across \(S=\{x_2=0\}\) can only arise from a factor whose first derivative has a jump, hence only from a ReLU-type factor.

If \(\sigma_p\in\cN_1\) and
\[
\{\bx:\bw_k^\top\bar\bx=0\}=S,
\]
then the \(p\)-factor contributes a constant jump multiplier, while the \(q\)-factor is evaluated on \(S\). Thus the contribution has the form
\[
C_k\sigma_q(r_kx_1+s_k).
\]
If this hyperplane is not equal to \(S\), it intersects \(S\) only in a lower-dimensional set and contributes no jump for almost every \(x_1\).

Similarly, if \(\sigma_q\in\cN_1\) and
\[
\{\bx:\bu_k^\top\bar\bx=0\}=S,
\]
then the contribution has the form
\[
D_k\sigma_p(\tilde r_kx_1+\tilde s_k).
\]
Summing over \(k=1,\ldots,m\) proves the claim.
\end{proof}

\begin{lemma}
\label{lem:A-not-fixed-jump-profile}
For every \(m\ge1\) and every \(p,q\in[7]\),
\[
A(t)=t_+ + \tanh(t)
\]
does not belong to
\[
\mathbf 1_{\{\sigma_p\in\cN_1\}}\cR_{\sigma_q}^{(m)}
+
\mathbf 1_{\{\sigma_q\in\cN_1\}}\cR_{\sigma_p}^{(m)}.
\]
\end{lemma}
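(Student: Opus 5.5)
We split according to which of $\sigma_p,\sigma_q$ lie in $\cN_1=\{\ReLU,\LeakyReLU\}$. If neither lies in $\cN_1$, the set is $\{0\}$ under the convention of Lemma~\ref{lem:fixed-type-II-jump-profile} that omitted terms contribute nothing. Since $A(1)=1+\tanh 1\neq0$, we get $A\notin\{0\}$.

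Otherwise the set is a sum of at most two classes $\cR_{\sigma}^{(m)}$ with $\sigma\in\{\sigma_p,\sigma_q\}$. More precisely, it is contained in $\cR_{\sigma_q}^{(m)}+\cR_{\sigma_p}^{(m)}$ when both lie in $\cN_1$, in $\cR_{\sigma_q}^{(m)}$ when only $\sigma_p\in\cN_1$, and in $\cR_{\sigma_p}^{(m)}$ when only $\sigma_q\in\cN_1$. In every nonzero case, one of the two activations involved is ReLU-type. The plan is to exhibit, for each possible resulting class, a regularity invariant that $A$ violates. We use two invariants of $A$ on $[-1,1]$.
\begin{itemize}
\item[(i)] $A$ is not piecewise linear, since on $(0,1)$ we have $A''(t)=-2\tanh(t)\,\mathrm{sech}^2(t)\neq0$.
\item[(ii)] $A'$ has a genuine jump at $t=0$: $A'(0^+)-A'(0^-)=1$.
\end{itemize}

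\textbf{Case (a): the class is $\cR_{\sigma}^{(m)}$ with $\sigma$ a $C^1$ activation.} This happens when only one of the two activations is in $\cN_1$ and the other is in $\{\Id,\ReLU^2,\GELU,\SiLU,\tanh\}$. Every element of $\cR_\sigma^{(m)}$ is then $C^1$ on $\mathbb R$. This contradicts (ii), exactly as in the smooth-activation case of Proposition~\ref{prop:LA-separation}.

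\textbf{Case (b): the class is $\cR_{\sigma}^{(m)}$, or a sum of two such classes, with every $\sigma$ involved in $\cN_1$.} This happens when the other activation is also ReLU-type, or when both are. Every element is then a finite sum of ridges $b\,\sigma(rt+s)$ with $\sigma$ piecewise linear, hence continuous piecewise linear with finitely many breakpoints. Its second derivative therefore vanishes on all but finitely many points. This contradicts (i), since $A''\neq0$ on all of $(0,1)$.

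These cases are exhaustive. When both activations are in $\cN_1$, both summands are piecewise linear and case (b) applies. When exactly one is in $\cN_1$, the single class that survives is that of the other activation, and it falls in case (a) or (b) depending on whether that activation is $C^1$. Degenerate parameters such as $r_\ell=0$ only produce constants, which belong to both the $C^1$ and piecewise-linear families, so the invariants are unaffected.

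\textbf{Main obstacle.} The delicate step is the bookkeeping of which class survives in the mixed case. For example, with $\sigma_p=\ReLU$ and $\sigma_q=\tanh$, only $\cR_{\tanh}^{(m)}$ appears, and one must not forget that the indicator kills the term $\cR_{\ReLU}^{(m)}$. This is precisely why $A$, which combines a kink with curvature, escapes every fixed pair. No quantitative estimate is needed, only these two qualitative invariants.
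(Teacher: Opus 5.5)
Your proof is correct and follows essentially the same route as the paper. You dispose of the $\{0\}$ case trivially, rule out the $C^1$ ridge classes by the unit jump of $A'$ at $0$, and rule out the piecewise-affine classes because $A''\neq 0$ on $(0,1)$. As in the paper, the subcase of a single surviving class with a ReLU-type activation is vacuous: when exactly one of $\sigma_p,\sigma_q$ lies in $\cN_1$, the surviving class belongs to the other activation, which is necessarily $C^1$.
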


\begin{proof}
If neither \(\sigma_p\) nor \(\sigma_q\) belongs to \(\cN_1\), the above class is \(\{0\}\), whereas \(A\not\equiv0\).

If exactly one of \(\sigma_p,\sigma_q\) belongs to \(\cN_1\), then the jump profile belongs to \(\cR_{\sigma}^{(m)}\) for some fixed \(\sigma\in\cK\). If
\[
\sigma\in
\{\mathrm{id},\mathrm{ReLU}^2,\mathrm{GELU},\mathrm{SiLU},\tanh\},
\]
then every element of \(\cR_{\sigma}^{(m)}\) is \(C^1\), while \(A\) is not \(C^1\) because
\[
A'(0^-)=1,
\qquad
A'(0^+)=2.
\]
Thus \(A\notin\cR_{\sigma}^{(m)}\).

If
\[
\sigma\in\{\mathrm{ReLU},\mathrm{LeakyReLU}\},
\]
then every element of \(\cR_{\sigma}^{(m)}\) is piecewise affine. However, on \((0,1)\),
\[
A(t)=t+\tanh(t),
\]
which is not affine because
\[
\frac{d^2}{dt^2}\tanh(t)
=
-2(1-\tanh^2(t))\tanh(t)
\]
is not identically zero on \((0,1)\). Hence \(A\notin\cR_{\sigma}^{(m)}\).

Finally, if both \(\sigma_p,\sigma_q\in\cN_1\), the jump profile class is a sum of two finite piecewise affine classes and is therefore still piecewise affine. Again, \(A\) is not piecewise affine on \((0,1)\). This proves the claim.
\end{proof}

\begin{proposition}[Strict separation of \(\cG_{\rm LA}^{(m)}\) from fixed Type-II classes]
\label{prop:type-II-LA-separation}
For every \(m\ge1\),
\[
T_{\rm LA}\notin
\bigcup_{p,q\in[7]}\cG_{\sigma_{p,q}}^{(m)}.
\]
\end{proposition}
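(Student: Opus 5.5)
The plan is to argue by contradiction, combining the two preceding lemmas through the jump operator $J_S$ across $S=\{x_2=0\}$. Suppose that for some $m\ge1$ and some $p,q\in[7]$ we had $T_{\rm LA}\in\cG_{\sigma_{p},\sigma_{q}}^{(m)}$, i.e.\ $T_{\rm LA}=f$ on $[-1,1]^2$ for some $f$ in this class. We work on the two-dimensional slice: all other coordinates are fixed at $0$ if $d>2$, and the weight vectors are restricted to the $(x_1,x_2,1)$ components. This restriction keeps $f$ inside a class of the same form with $d=2$. Since $f=T_{\rm LA}$ identically, their jump profiles agree wherever both are defined: $J_S(f)(x_1)=J_S(T_{\rm LA})(x_1)$ for almost every $x_1\in[-1,1]$.

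First compute $J_S(T_{\rm LA})$ directly. Because $T_{\rm LA}(x_1,x_2)=(x_2)_+A(x_1)$, we have $\partial_{x_2}T_{\rm LA}=\mathbf 1_{\{x_2>0\}}A(x_1)$. Hence $J_S(T_{\rm LA})(x_1)=A(x_1)=(x_1)_++\tanh(x_1)$ for every $x_1\in[-1,1]$.

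Next, apply Lemma~\ref{lem:fixed-type-II-jump-profile}. It gives $J_S(f)\in \mathbf 1_{\{\sigma_p\in\cN_1\}}\cR_{\sigma_q}^{(m)}+\mathbf 1_{\{\sigma_q\in\cN_1\}}\cR_{\sigma_p}^{(m)}$ almost everywhere on $[-1,1]$. So $A$ agrees almost everywhere on $[-1,1]$ with some element $h$ of this class.

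The remaining step is to upgrade "almost everywhere equal" to the exact membership that Lemma~\ref{lem:A-not-fixed-jump-profile} excludes. This is where I expect the only real care is needed. Both $A$ and every element $h$ of the ridge classes are continuous, since all activations in $\cK$ are continuous. Two continuous functions that agree almost everywhere on $[-1,1]$ agree everywhere there. The case analysis in Lemma~\ref{lem:A-not-fixed-jump-profile} uses only local properties on $[-1,1]$: non-$C^1$ behavior at $0$, and non-affineness on $(0,1)$. Therefore it applies verbatim to the restriction to $[-1,1]$, contradicting $A=h$ on $[-1,1]$.

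One should also check that the jump traces of $f$ exist almost everywhere. This holds because, as noted in the Additional Notations, only finitely many singular hyperplanes occur, and each hyperplane different from $S$ meets $S$ in at most one point. This contradiction holds for every $(p,q)$ and every $m$, which proves the proposition.
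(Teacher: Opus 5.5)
Your proposal is correct and follows essentially the same route as the paper: compute $J_S(T_{\rm LA})=A$, use Lemma~\ref{lem:fixed-type-II-jump-profile} to place the jump profile in the fixed-activation ridge class, and contradict Lemma~\ref{lem:A-not-fixed-jump-profile}. Your additional steps are valid refinements of details the paper leaves implicit, namely restricting to the $(x_1,x_2)$ slice when $d>2$ and upgrading almost-everywhere equality to equality on $[-1,1]$ via continuity.
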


\begin{proof}
Suppose, for contradiction, that
\[
T_{\rm LA}\in\cG_{\sigma_{p,q}}^{(m)}
\]
for some \(p,q\in[7]\). Taking the jump of the \(x_2\)-derivative across \(S\), we obtain
\[
J_S(T_{\rm LA})(x_1)
=
A(x_1)
=
(x_1)_+ + \tanh(x_1).
\]
By Lemma~\ref{lem:fixed-type-II-jump-profile},
\[
J_S(T_{\rm LA})
\in
\mathbf 1_{\{\sigma_p\in\cN_1\}}\cR_{\sigma_q}^{(m)}
+
\mathbf 1_{\{\sigma_q\in\cN_1\}}\cR_{\sigma_p}^{(m)}.
\]
This contradicts Lemma~\ref{lem:A-not-fixed-jump-profile}. Therefore,
\[
T_{\rm LA}\notin
\bigcup_{p,q\in[7]}\cG_{\sigma_{p,q}}^{(m)}.
\]
\end{proof}

\subsubsection{Separation between \(\cG_{\rm MoA}^{(m)}\) and \(\cG_{\rm LA}^{(m)}\)}

Fix \(\lambda>0\). Define
\[
B_\lambda(t):=t_+\tanh(\lambda t),
\]
and consider
\[
T_{\rm MoA}(x_1,x_2)
:=
(x_2)_+B_\lambda(x_1)
=
(x_2)_+(x_1)_+\tanh(\lambda x_1).
\]

\begin{proposition}[Exact representation by \(\cG_{\rm MoA}^{(1)}\)]
\label{prop:type-II-MoA-exact}
We have
\[
T_{\rm MoA}\in\cG_{\rm MoA}^{(1)}.
\]
\end{proposition}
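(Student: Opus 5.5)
We need $T_{\rm MoA}(x_1,x_2)=(x_2)_+(x_1)_+\tanh(\lambda x_1)$ to lie in $\cG_{\rm MoA}^{(1)}$. The plan is to exhibit explicit parameters for a single neuron ($m=1$), in the same way as Proposition~\ref{prop:type-II-LA-exact}, except that the constant coefficient is replaced by an input-dependent $\tanh$ gate.

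\textbf{Choice of neuron.} Take one neuron with
\[
a_1=1,\qquad \bw_1=(0,1,0),\qquad \bu_1=(1,0,0).
\]
Then $\bw_1^\top\bar\bx=x_2$ and $\bu_1^\top\bar\bx=x_1$. This reproduces the $(x_2)_+(x_1)_+$ factor through the activation pair $(p,q)=(\mathrm{ReLU},\mathrm{ReLU})$, which satisfies $p\le q$. The product is therefore admissible in the quadratic sum.

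\textbf{Choice of gates.} For this pair, set $\bv_{\mathrm{ReLU},\mathrm{ReLU}}=(\lambda,0,0)$, so that $\tanh(\bv_{\mathrm{ReLU},\mathrm{ReLU}}^\top\bar\bx)=\tanh(\lambda x_1)$. For every other pair $(p,q)$ with $p\le q$, set $\bv_{p,q}=\mathbf 0$, so that $\tanh(\bv_{p,q}^\top\bar\bx)=\tanh(0)=0$. This kills all remaining terms, because each $\sigma_p(x_2)\sigma_q(x_1)$ is finite on $\Omega$.

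\textbf{Verification.} The function realized by this neuron is
\[
\tanh(\lambda x_1)\,(x_2)_+\,(x_1)_+ = T_{\rm MoA}(x_1,x_2),
\]
which proves membership in $\cG_{\rm MoA}^{(1)}$. In the general setting $d\ge2$, the weight vectors are padded with zeros in coordinates $3,\dots,d$, and the bias coordinate remains the last entry.

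This is a direct construction with no genuine obstacle. The only points to check are that $(\mathrm{ReLU},\mathrm{ReLU})$ is a valid index in the sum over $1\le p\le q\le7$, and that the gate may use a non-bias coordinate. That freedom is exactly what distinguishes $\cG_{\rm MoA}^{(1)}$ from $\cG_{\rm LA}^{(1)}$.
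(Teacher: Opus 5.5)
Your construction is correct and matches the paper's proof. It uses the same single neuron $a_1=1$, $\bw_1=(0,1,0)$, $\bu_1=(1,0,0)$, the same gate $\bv_{\mathrm{ReLU},\mathrm{ReLU}}=(\lambda,0,0)$ on the admissible diagonal pair, and sets all other $\bv_{p,q}=\mathbf 0$ so that $\tanh(0)=0$ removes them; your remark about zero-padding the weight vectors for general $d\ge 2$ is a harmless clarification the paper leaves implicit.
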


\begin{proof}
Choose one neuron with
\[
a_1=1,
\qquad
\bw_1=(0,1,0),
\qquad
\bu_1=(1,0,0).
\]
Then
\[
\bw_1^\top\bar\bx=x_2,
\qquad
\bu_1^\top\bar\bx=x_1.
\]
Use the branch
\[
(p,q)=(\mathrm{ReLU},\mathrm{ReLU}),
\]
which satisfies \(p=q\), and set
\[
\bv_{\mathrm{ReLU},\mathrm{ReLU}}
=
(\lambda,0,0).
\]
For all other branches, set \(\bv_{p,q}=\mathbf 0\), so their gates vanish because \(\tanh(0)=0\). Then
\[
\tanh(\bv_{\mathrm{ReLU},\mathrm{ReLU}}^\top\bar\bx)
\sigma_{\mathrm{ReLU}}(\bw_1^\top\bar\bx)
\sigma_{\mathrm{ReLU}}(\bu_1^\top\bar\bx)
=
\tanh(\lambda x_1)(x_2)_+(x_1)_+.
\]
Thus
\[
T_{\rm MoA}\in\cG_{\rm MoA}^{(1)}.
\]
\end{proof}

\begin{lemma}[Jump profiles of Type-II LA networks]
\label{lem:LA-type-II-jump-profile}
Let
\[
f\in\cG_{\rm LA}^{(m)}.
\]
Then
\[
J_S(f)\in\cA^{(2m)}.
\]
\end{lemma}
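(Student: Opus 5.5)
We mirror the proof of Lemma~\ref{lem:fixed-type-II-jump-profile}, now keeping track of all activation pairs. Write
\[
f(\bx)=\sum_{k=1}^m a_k\sum_{1\le p\le q\le 7}\alpha_{p,q}\,\sigma_p(\bw_k^\top\bar\bx)\,\sigma_q(\bu_k^\top\bar\bx).
\]
Since $J_S$ is linear, it suffices to compute the jump of each product $\sigma_p(\bw_k^\top\bar\bx)\sigma_q(\bu_k^\top\bar\bx)$ across $S=\{x_2=0\}$ and then sum. We fix the neuron $k$ and split into cases according to whether the hyperplanes $H_k^w=\{\bw_k^\top\bar\bx=0\}$ and $H_k^u=\{\bu_k^\top\bar\bx=0\}$ coincide with $S$.

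First take a factor whose hyperplane is not $S$. Its singular set meets $S$ only at a single value of $x_1$, or not at all, so by the product rule it contributes no jump for a.e.\ $x_1$. Only ReLU-type factors, i.e.\ activations in $\cN_1$, have derivative jumps; all other activations are $C^1$.

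Now suppose $H_k^w=S$, so $\bw_k=(0,w,0)$ up to the bias, which must vanish, with $w\neq0$. If $\sigma_p\in\cN_1$, the product rule gives a jump of the form $c_p(w)\,\sigma_q(\bu_k^\top\bar\bx)|_{x_2=0}$. Here $c_p(w)$ is a constant: $|w|$ or $w$ times $1$ or $(1-\eta)$, with the appropriate sign. The second factor equals $\sigma_q(r_kx_1+s_k)$ with $r_k=(\bu_k)_1$ and $s_k=(\bu_k)_3$. The other term of the product rule is $\sigma_p(0)\cdot\partial_{x_2}\sigma_q(\cdots)$, which is continuous unless $H_k^u=S$ as well. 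Summing over $p\le q$, the contribution of neuron $k$ through this mechanism is $\sum_{c}b_{k,c}\,\sigma_c(r_kx_1+s_k)$, a single ridge $(r_k,s_k)$ carrying coefficients $b_{k,c}=a_k\sum_{p\in\cN_1}\alpha_{p,c}c_p(w)$, with the ordering of the pair symmetrized. Symmetrically, if $H_k^u=S$, we obtain a second ridge $(\tilde r_k,\tilde s_k)=((\bw_k)_1,(\bw_k)_3)$ with its own coefficients. Each neuron therefore produces at most two ridges, and summing over $k$ gives an element of $\cA^{(2m)}$. This is exactly why the class $\cA$ allows coefficients depending on $\ell$.

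The step needing the most care is the degenerate case where both $H_k^w=S$ and $H_k^u=S$. Then both factors are functions of $x_2$ alone, and the product restricted to $S$ is constant in $x_1$, since $\sigma_p(0)$ and $\sigma_q(0)$ are fixed numbers. Its jump is a constant, which is a ridge with $r=0$ and lies in $\cA^{(1)}$ (e.g.\ via $\mathrm{id}$ with $r=0,s=1$). We absorb it into one of the two ridge slots of that neuron: a neuron in this case contributes no $x_1$-dependent ridge, so a slot is free. We also verify that the one-sided traces exist a.e.\ because the singular hyperplanes are finite in number, as noted in the additional notations.
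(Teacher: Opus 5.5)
Your proposal is correct and follows the paper's argument. Only $\cN_1$-factors whose kink hyperplane coincides with $S$ produce a jump, and the other factor restricted to $S$ gives a single dictionary ridge in $x_1$. So each neuron contributes at most two ridges, which yields $\cA^{(2m)}$. Your explicit product-rule computation and your separate treatment of the case where both hyperplanes equal $S$ just make rigorous what the paper leaves implicit.

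One small quibble: no symmetrization of $\alpha_{p,q}$ is needed, since the $\bw$-branch always carries the first index. The correct coefficient is $b_{k,c}=a_k\sum_{p\in\cN_1,\,p\le c}\alpha_{p,c}c_p(w)$, and this does not affect the membership conclusion.
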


\begin{proof}
Write
\[
f(\bx)
=
\sum_{k=1}^m
a_k
\sum_{1\le p\le q\le7}
\alpha_{p,q}
\sigma_p(\bw_k^\top\bar\bx)
\sigma_q(\bu_k^\top\bar\bx).
\]
A jump in \(\partial_{x_2}f\) across \(S\) can only arise from factors whose activation is in
\[
\cN_1=\{\mathrm{ReLU},\mathrm{LeakyReLU}\}
\]
and whose singular hyperplane coincides with \(S\).

If the \(p\)-factor produces the jump for neuron \(k\), then the remaining \(q\)-factor contributes
\[
\sum_{q:\,p\le q\le7}
c_{k,q}\sigma_q(r_kx_1+s_k),
\]
for some coefficients \(c_{k,q}\). This is contained in one dictionary ridge of the class \(\cA^{(M)}\), after setting the missing activation coefficients to zero.

If the \(q\)-factor produces the jump for neuron \(k\), then the remaining \(p\)-factor contributes
\[
\sum_{p:\,1\le p\le q}
\tilde c_{k,p}\sigma_p(\tilde r_kx_1+\tilde s_k).
\]
This is again contained in one dictionary ridge of \(\cA^{(M)}\).

Thus each neuron contributes at most two dictionary ridges to \(J_S(f)\). Therefore,
\[
J_S(f)\in\cA^{(2m)}.
\]
\end{proof}

\begin{lemma}
\label{lem:B-not-A}
For every \(M\ge1\) and every \(\lambda>0\),
\[
B_\lambda(t)=t_+\tanh(\lambda t)
\notin
\cA^{(M)}.
\]
\end{lemma}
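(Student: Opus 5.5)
The plan is to compare one-sided Taylor data at the breakpoint \(t=0\), using derivatives of order four. Two cheaper invariants fail here. First, \(B_\lambda\) is \(C^1\) at \(0\), since \(B_\lambda(t)\sim\lambda t^2\) for small \(t>0\); so the first-derivative jump argument of Lemma~\ref{lem:A-not-fixed-jump-profile} gives nothing. Second, a purely "analytic-piece" argument on \((0,1)\) cannot work either. Because \(\tanh(y)=2\,\sigmoid(2y)-1\), on \(t>0\) we have
\[
t\tanh(\lambda t)=\tfrac{1}{\lambda}\SiLU(2\lambda t)-t,
\]
which already belongs to \(\cA^{(1)}\) on that half-line. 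The obstruction must therefore be the \emph{way} \(B_\lambda\) switches on at \(0\), measured by jumps of higher derivatives.

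\textbf{Step 1 (local structure of \(\cA^{(M)}\)).} Take \(g(t)=\sum_{\ell=1}^M\sum_{c=1}^7 b_{\ell,c}\sigma_c(r_\ell t+s_\ell)\in\cA^{(M)}\). The activations \(\mathrm{id}\), \(\GELU\), \(\SiLU\), \(\tanh\) are real-analytic on \(\R\), so their ridge terms are \(C^\infty\) everywhere. The remaining activations \(\ReLU\), \(\LeakyReLU\), \(\ReLU^2\) are piecewise polynomials of degree at most \(2\) with a single breakpoint at \(0\). Hence their ridge terms (for \(r_\ell\neq0\)) are piecewise polynomials of degree \(\le2\) with breakpoint \(-s_\ell/r_\ell\); when \(r_\ell=0\) they are constants. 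It follows that \(g\) is \(C^\infty\) away from finitely many points and has one-sided derivatives of all orders at each point. Moreover, at every \(t_0\in\R\), the jump \(g^{(k)}(t_0^+)-g^{(k)}(t_0^-)\) is a sum of jumps of piecewise polynomials of degree \(\le 2\). These vanish identically for \(k\ge3\). So the key invariant is
\[
g^{(k)}(t_0^+)=g^{(k)}(t_0^-)\qquad\text{for all } g\in\cA^{(M)},\ t_0\in\R,\ k\ge3 .
\]

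\textbf{Step 2 (compute the jump for \(B_\lambda\)).} For \(t<0\), \(B_\lambda\equiv0\), so all left derivatives at \(0\) vanish. For \(t>0\), \(B_\lambda(t)=h(t):=t\tanh(\lambda t)\), and \(h\) is analytic with
\[
h(t)=\lambda t^2-\tfrac{\lambda^3}{3}t^4+O(t^6).
\]
Hence \(B_\lambda^{(4)}(0^+)=h^{(4)}(0)=-8\lambda^3\neq0\), while \(B_\lambda^{(4)}(0^-)=0\).

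\textbf{Step 3 (conclude).} If \(B_\lambda=g\) on \([-1,1]\) for some \(g\in\cA^{(M)}\), then the one-sided fourth derivatives at \(0\) coincide. Step 1 would then force a zero jump, contradicting \(-8\lambda^3\neq0\) from Step 2. The case \(r_\ell=0\) (constant terms) is harmless. The argument is independent of \(M\) and uses only \(\lambda>0\).

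\textbf{Main obstacle.} The step I expect to need most care is Step 1. One must check that no dictionary element produces derivative jumps of order \(\ge3\): in particular that \(\ReLU^2\) contributes only a second-derivative jump, and that \(\GELU\), \(\SiLU\), \(\tanh\) are globally analytic. Compared with that, the Taylor computation in Step 2 is routine. The choice of order \(4\) is forced: the third derivative of \(h\) at \(0\) vanishes, because \(h\) is even.
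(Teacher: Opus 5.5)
Your proof is correct and is essentially the paper's argument. Both treat the $\ReLU$, $\LeakyReLU$ and $\ReLU^2$ ridge terms as piecewise polynomials of degree at most $2$ near $0$ and the rest as smooth, and both get the contradiction from the nonzero $t^4$ coefficient $-\lambda^3/3$ of $t\tanh(\lambda t)$. The only difference is packaging: the paper uses analyticity to show $B_\lambda$ would be a quadratic polynomial on $(0,\varepsilon)$, while your vanishing of derivative jumps of order $\ge 3$ reaches the same contradiction using only $C^4$ regularity of the smooth terms.
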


\begin{proof}
Suppose, for contradiction, that
\[
B_\lambda(t)
=
\sum_{\ell=1}^M\sum_{c=1}^7
b_{\ell,c}\sigma_c(r_\ell t+s_\ell)
\]
on \([-1,1]\).

Let \(\mathcal Z\) be the finite set of kink locations of all ReLU-type terms
\[
\mathrm{ReLU}(r_\ell t+s_\ell),
\qquad
\mathrm{ReLU}^2(r_\ell t+s_\ell),
\qquad
\mathrm{LeakyReLU}(r_\ell t+s_\ell).
\]
Choose \(\varepsilon>0\) such that
\[
(-\varepsilon,\varepsilon)\cap\mathcal Z
\subset\{0\}.
\]
On \((-\varepsilon,0)\), we have
\[
B_\lambda(t)=0.
\]
On each of the intervals \((-\varepsilon,0)\) and \((0,\varepsilon)\), every ReLU-type term is a polynomial of degree at most \(2\). Moreover, the only ReLU-type terms whose polynomial expression can differ between the two sides are those with kink at \(0\).

The smooth activations
\[
\mathrm{id},\quad
\mathrm{GELU},\quad
\mathrm{SiLU},\quad
\tanh
\]
are real analytic near \(0\). Therefore, the right-hand side can be written as
\[
S(t)+P_-(t)
\quad\text{on }(-\varepsilon,0),
\]
and
\[
S(t)+P_+(t)
\quad\text{on }(0,\varepsilon),
\]
where \(S\) is real analytic near \(0\), and \(P_-\), \(P_+\) are polynomials of degree at most \(2\).

Since \(B_\lambda(t)=0\) on \((-\varepsilon,0)\), we have
\[
S(t)+P_-(t)=0
\quad\text{on }(-\varepsilon,0).
\]
By analyticity,
\[
S(t)=-P_-(t)
\]
in a neighborhood of \(0\). Hence on \((0,\varepsilon)\),
\[
B_\lambda(t)
=
S(t)+P_+(t)
=
P_+(t)-P_-(t),
\]
which is a polynomial of degree at most \(2\).

However, for \(t>0\),
\[
B_\lambda(t)
=
t\tanh(\lambda t)
=
\lambda t^2
-
\frac{\lambda^3}{3}t^4
+
O(t^6).
\]
The \(t^4\) coefficient is nonzero. Therefore \(B_\lambda\) is not a polynomial of degree at most \(2\) on any interval \((0,\varepsilon)\), a contradiction. Hence
\[
B_\lambda\notin\cA^{(M)}.
\]
\end{proof}

\begin{proposition}[Strict separation of \(\cG_{\rm MoA}^{(m)}\) from \(\cG_{\rm LA}^{(m)}\)]
\label{prop:type-II-MoA-separation}
For every \(m\ge1\) and every \(\lambda>0\),
\[
T_{\rm MoA}\notin\cG_{\rm LA}^{(m)}.
\]
\end{proposition}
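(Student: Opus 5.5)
The proof will be by contradiction, following the same pattern as Proposition~\ref{prop:type-II-LA-separation}, with Lemma~\ref{lem:LA-type-II-jump-profile} and Lemma~\ref{lem:B-not-A} supplying the two ingredients. Suppose that for some $m\ge1$ and $\lambda>0$ we have $T_{\rm MoA}\in\cG_{\rm LA}^{(m)}$. We will compute the jump profile of $\partial_{x_2}T_{\rm MoA}$ across $S=\{x_2=0\}$ in two ways. The first computation uses the explicit formula for $T_{\rm MoA}$. The second uses the structural description of jump profiles available to Type-II LA networks. We then show that the two are incompatible.

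\textbf{Step 1 (jump of the target).} We work on $[-1,1]^2$, restricting to the first two coordinates; the remaining coordinates, when $d>2$, are frozen and play no role. For $x_2\neq0$,
\[
\partial_{x_2}T_{\rm MoA}(x_1,x_2)=\mathbf 1_{\{x_2>0\}}\,B_\lambda(x_1).
\]
The one-sided traces at $S$ are therefore $B_\lambda(x_1)$ from above and $0$ from below. This gives $J_S(T_{\rm MoA})(x_1)=B_\lambda(x_1)=(x_1)_+\tanh(\lambda x_1)$ for every $x_1\in[-1,1]$.

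\textbf{Step 2 (jump of an LA network).} If $T_{\rm MoA}=f$ for some $f\in\cG_{\rm LA}^{(m)}$, then Lemma~\ref{lem:LA-type-II-jump-profile} gives $J_S(f)\in\cA^{(2m)}$. The identity $J_S(f)=B_\lambda$ holds for almost every $x_1$, since both functions coincide on each open half-domain and their traces exist almost everywhere. Both sides are continuous except at finitely many points: elements of $\cA^{(2m)}$ are continuous, and $B_\lambda$ is continuous. The equality therefore upgrades to equality on all of $[-1,1]$. This yields $B_\lambda\in\cA^{(2m)}$.

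\textbf{Step 3 (contradiction).} Lemma~\ref{lem:B-not-A}, applied with $M=2m$, states that $B_\lambda\notin\cA^{(M)}$ for every $M$. This contradicts Step 2, so $T_{\rm MoA}\notin\cG_{\rm LA}^{(m)}$. By the inclusion $\bigcup_{p,q}\cG_{\sigma_p,\sigma_q}^{(m)}\subset\cG_{\rm LA}^{(m)}$ proved earlier, it follows that $T_{\rm MoA}$ also lies outside every fixed Type-II class.

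The only delicate point is the passage from an almost-everywhere identity of jump profiles to membership in $\cA^{(2m)}$ in Step 2. Two issues arise there. First, some neurons may have singular hyperplanes that are not exactly $S$ but meet it in isolated points; these contribute nothing on a set of full measure. Second, we need continuity of ridge sums to remove the exceptional null set. Neither issue affects the conclusion, because Lemma~\ref{lem:B-not-A} argues on a small punctured neighbourhood of $0$, where any finitely many exceptional points can be avoided.
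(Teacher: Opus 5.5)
Your proposal is correct and follows the paper's proof: compute $J_S(T_{\rm MoA})=B_\lambda$, place this jump profile in $\cA^{(2m)}$ via Lemma~\ref{lem:LA-type-II-jump-profile}, and derive a contradiction with Lemma~\ref{lem:B-not-A}. Your two additions make explicit details the paper leaves implicit: freezing $x_3,\dots,x_d$ to reduce to the planar setting, and upgrading the almost-everywhere identity of jump profiles to an everywhere identity by continuity.
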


\begin{proof}
Suppose, for contradiction, that
\[
T_{\rm MoA}\in\cG_{\rm LA}^{(m)}.
\]
Taking the jump of the \(x_2\)-derivative across \(S=\{x_2=0\}\), we obtain
\[
J_S(T_{\rm MoA})(x_1)
=
B_\lambda(x_1)
=
(x_1)_+\tanh(\lambda x_1).
\]
By Lemma~\ref{lem:LA-type-II-jump-profile},
\[
J_S(T_{\rm MoA})\in\cA^{(2m)}.
\]
This contradicts Lemma~\ref{lem:B-not-A}. Therefore,
\[
T_{\rm MoA}\notin\cG_{\rm LA}^{(m)}.
\]
\end{proof}

\subsubsection{Proof of Theorem~\ref{thm: expressive hierarchy, Type II}}

The two basic inclusions have already been proved:
\[
\bigcup_{p,q\in[7]}\cG_{\sigma_{p,q}}^{(m)}
\subset
\cG_{\rm LA}^{(m)}
\subset
\cG_{\rm MoA}^{(m)}.
\]

The first inclusion is strict because
\[
T_{\rm LA}(x_1,x_2)
=
(x_2)_+\bigl((x_1)_+ + \tanh(x_1)\bigr)
\]
satisfies
\[
T_{\rm LA}\in\cG_{\rm LA}^{(1)}
\subset
\cG_{\rm MoA}^{(1)},
\]
but, by Proposition~\ref{prop:type-II-LA-separation},
\[
T_{\rm LA}\notin
\bigcup_{p,q\in[7]}\cG_{\sigma_{p,q}}^{(m)}.
\]

The second inclusion is strict because
\[
T_{\rm MoA}(x_1,x_2)
=
(x_2)_+(x_1)_+\tanh(\lambda x_1)
\]
satisfies
\[
T_{\rm MoA}\in\cG_{\rm MoA}^{(1)},
\]
but, by Proposition~\ref{prop:type-II-MoA-separation},
\[
T_{\rm MoA}\notin\cG_{\rm LA}^{(m)}.
\]
Since
\[
\bigcup_{p,q\in[7]}\cG_{\sigma_{p,q}}^{(m)}
\subset
\cG_{\rm LA}^{(m)},
\]
we also have
\[
T_{\rm MoA}\notin
\bigcup_{p,q\in[7]}\cG_{\sigma_{p,q}}^{(m)}.
\]

Combining these results gives
\[
\bigcup_{p,q\in[7]}\cG_{\sigma_{p,q}}^{(m)}
\subsetneq
\cG_{\rm LA}^{(m)}
\subsetneq
\cG_{\rm MoA}^{(m)}.
\]
\qed

\end{document}